\def\eqref#1{equation~\ref{#1}}
\def\1{\bm{1}}
\DeclareMathAlphabet{\mathsfit}{\encodingdefault}{\sfdefault}{m}{sl}
\SetMathAlphabet{\mathsfit}{bold}{\encodingdefault}{\sfdefault}{bx}{n}
\newcommand{\red}[1]{\textcolor{black}{#1}}
\title{FedLWS: Federated Learning with Adaptive Layer-wise Weight Shrinking}
\author{Changlong Shi$^1$, Jinmeng Li$^1$, He Zhao$^2$, Dandan Guo$^{1*}$, Yi Chang$^{1\,3\,4}$\thanks{ Corresponding authors.} \\
School of Artificial Intelligence, Jilin University$^1$ \\CSIRO’s Data61$^2$ International Center of Future Science, Jilin University$^3$\\
Engineering Research Center of Knowledge-Driven Human-Machine Intelligence, MOE, China$^4$ \\
\texttt{\{shicl22,lijm9921\}@mails.jlu.edu.cn,}\\
\texttt{he.zhao@data61.csiro.au,
\{guodandan,yichang\}@jlu.edu.cn} 
}
\begin{document}

\maketitle
\vspace{-1em}
\begin{abstract}
In Federated Learning (FL), weighted aggregation of local models is conducted to generate a new global model, and the aggregation weights are typically normalized to 1. A recent study identifies the global weight shrinking effect in FL, indicating an enhancement in the global model’s generalization when the sum of weights (i.e., the shrinking factor) is smaller than 1, where how to learn the shrinking factor becomes crucial. However, principled approaches to this solution have not been carefully studied from the adequate consideration of privacy concerns and layer-wise distinctions. To this end, we propose a novel model aggregation strategy, Federated Learning with Adaptive Layer-wise Weight Shrinking (FedLWS), which adaptively designs the shrinking factor in a layer-wise manner and avoids optimizing the shrinking factors on a proxy dataset. We initially explored the factors affecting the shrinking factor during the training process. Then we calculate the layer-wise shrinking factors by considering the distinctions among each layer of the global model. FedLWS can be easily incorporated with various existing methods due to its flexibility. Extensive experiments under diverse scenarios demonstrate the superiority of our method over several state-of-the-art approaches, providing a promising tool for enhancing the global model in FL. 
\end{abstract}

\vspace{-1em}
\section{Introduction} \label{introduction}

Federated Learning (FL) as an innovative paradigm in machine learning has attracted substantial attention in recent years \citep{survy2,survey3,survy1,survy4}. This distributed optimization approach allows model updates to be computed locally and aggregated without exposing raw data, thereby effectively addressing the challenges arising from distributed data sources while simultaneously preserving privacy and security. In FL, weighted aggregation of local models is conducted to generate the global model, where how to design the aggregation scheme is a critical problem \citep{survy1,survy4,feddisco}.

Most previous works \citep{fedprox,fedavg} conduct model aggregation simply based on the local dataset relative size, which however could be sub-optimal empirically due to the data heterogeneity. 
Consequently, many methods \citep{feddisco,fedavgm,fednova} focus on adjusting the model aggregation scheme to enhance the global model's performance, with the majority of these methods normalizing the aggregation weights (i.e., the sum of aggregation weights is 1, represented as $\gamma=1$, where $\gamma$ is the sum of weights).
Recently, \cite{fedlaw} revisited the weighted aggregation process and gained new insights into the training dynamics of FL, identifying the Global Weight Shrinking effect (GWS, analogous to weight decay) in FL, when $\gamma$ is smaller than 1, which can further enhance the global model's generalization.
The study demonstrates that during the training process, the shrinking factor $\gamma$ plays a crucial role in maintaining the balance between the regularization term and the optimization term. 
\cite{fedlaw} proposed FedLAW, which optimizes the value of $\gamma$ using gradient descent on a proxy dataset that has the same distribution as the global dataset. Despite its notable performance improvements, two key issues limit its practical applicability and adaptability.

Firstly, the use of proxy datasets may raise privacy-related concerns, which are particularly crucial in the context of FL. Additionally, obtaining a dataset with a distribution identical to the global dataset is challenging in practice, and discrepancies between the proxy and global data distributions can negatively impact the method’s effectiveness.
Secondly, divergence across different layers of deep neural networks has been demonstrated in numerous previous studies \citep{pfedla,ldawa,fedlama}. Hence, simply applying a single shrinking factor $\gamma$ across the entire model may not effectively harness the benefits of global weight shrinking."

To address the aforementioned issues, we initially conducted a series of experiments to explore factors influencing the shrinking factor‘s value during Federated Learning training. Building on these empirical observations from our experiments, we subsequently propose our method, Federated Learning with Adaptive Layer-wise Weight Shrinking (FedLWS).
By analyzing the relationship between the regularization and optimization terms, and examining how their ratio correlates with the variance of local gradients, we derive an expression for the shrinking factor $\gamma$. This factor is computed directly using the gradients and parameters of the global model, which are readily accessible in Federated Learning. 
Consequently, FedLWS eliminates the need to optimize the shrinking factor $\gamma$ on a proxy dataset, as required by previous work \citep{fedlaw}. This not only addresses privacy-related concerns but also enhances its feasibility and applicability in practical, real-world deployments.
Furthermore, by leveraging the gradients and parameters of each layer in the global model, it is easy to calculate the layer-wise $\gamma$ for each layer. This allows for an improvement in model generalization by considering layer-wise differences of shrinking factor. Moreover, our approach is conducted after the server-side model aggregation. That is to say, it is orthogonal to many existing Federated Learning methods, making it easily integrated with them to further enhance model performance.
To validate the effectiveness of our proposed FedLWS, we conduct extensive experiments under diverse scenarios. We observe that FedLWS can significantly improve the performance of existing FL algorithms.

The contributions of this work are summarized  as follows: % follows:
\begin{itemize}

\item[$\bullet$] We empirically show that the ratio of regularization to optimization terms in FL model aggregation is positively correlated with local gradient variance. This allows us to directly calculate the shrinking factor, eliminating the need for fine-tuning on a proxy dataset.

\item[$\bullet$] We experimentally demonstrate that the model-wise shrinking factor is suboptimal to improve the generalization of the global model. It is essential to consider the discrepancy between different layers.
% \vspace{-0.1cm}
\item[$\bullet$] We propose FedLWS, a simple method that generates the global model with better generalization through layer-wise weight shrinking. FedLWS requires neither additional data nor transmission of the original data, thus raising no privacy concern.
% \vspace{-0.1cm}
\item[$\bullet$] We conduct extensive experiments under diverse scenarios to demonstrate that FedLWS brings considerable accuracy gains over the state-of-the-art FL approaches.
% \vspace{-0.1cm}

\end{itemize}

\vspace{-1em}
\section{Related Works}
Federated learning (FL) is a rapidly advancing research field with many remaining open problems to address. FedAvg \citep{fedavg} has been the standard algorithm of FL. It trains local models separately and conducts model aggregation based on the data size.  However, the distribution heterogeneity of local datasets may significantly degrade FL’s performance.
In this paper, we aim to attain a more effective and better generalized global model through collaborative training on both the client and server sides.
Many previous works focus on this can be mainly divided into two directions.
% However, the heterogeneity in data distribution may significantly degrade the performance of FedAvg. There have been quite some studies trying to improve FedAvg on non-IID data, which can be mainly divided into two directions.
% Many previous works focus on this problem can be mainly divided into two directions.
 \vspace{-1em}
\subsection{Client-side Drift Adjustment}
Due to the data heterogeneity, local models trained on the clients may exhibit different degrees of bias, thereby affecting the performance of the global model. Many methods aim to reduce this bias by adjusting the training process of local models. FedProx \citep{fedprox} utilizes the $l_2$-distance between the global model and the local model as a regularization term during the training of the local model. FedDyn \citep{feddyn} proposes a dynamic regularizer for each client to align the global and local solutions. MOON \citep{moon} aligns the features of global and local models through contrastive learning. FedDC \citep{feddc} and SCAFFOLD \citep{scaffold} adjust the drift in the local model by introducing control variates. 
% Inspired by the neural collapse phenomenon, 
FedETF \citep{fedetf} employs a fixed ETF classifier during training to learn unified feature representations. 
These methods concentrate on adjusting local models, merely assigning aggregation weights based on the size of the local dataset. In contrast, our approach focuses on the server-side aggregation process, which can be easily combined with these methods to enhance model performance further due to its flexibility.
 \vspace{-1em}
\subsection{Server-side Aggregation Scheme}

Several other previous works adjust the server-side model by improving the model aggregation stage or incorporating a fine-tuning step to obtain a better global model. FedAvgM \citep{fedavgm} adopts momentum updates on the server-side to stabilize the training process.
CCVR \citep{ccvr} adjusts the global model's classifier using virtual representations.
FedDF \citep{feddf} and FedBE \citep{fedbe} fine-tune the global model on the server. FedDisco \citep{feddisco} leverages both dataset size and the discrepancy between local and global category distributions to determine more distinguishing aggregation weights. Several prior studies have investigated layer-wise model aggregation. \red{FedLAMA \citep{fedlama} adjusts the aggregation frequency for each layer to reduce communication costs while accounting for inter-layer differences. pFedLA \citep{pfedla} focuses on personalized FL, it designs a hyper-network to predict the layer-wise aggregation weights for each client.}
L-DAWA \citep{ldawa} employs cosine similarity between local models and the global model as aggregation weights for model aggregation, simultaneously taking into account variations across different layers of the model. 
\red{In comparison to these methods, our method is neither focused on aggregation frequency adjustment nor layer-wise aggregation weight adjustment. Instead, we propose an adaptive layer-wise weight shrinking step after model aggregation to mitigate aggregation bias, which is both computationally efficient and modular, enabling seamless integration with various FL frameworks and baselines.}
Recently, FedLAW \citep{fedlaw} identifies the global weight shrinking phenomenon and then learns the optimal shrinking factor $\gamma$ and the aggregation weights $\lambda$ at the server with a proxy dataset, which is assumed to have the same distribution as the test dataset. Its success is inseparable from the high degree of consistency between the proxy data and the test data. Considering data privacy is a significant concern in FL, obtaining the proxy dataset with a distribution identical to the test dataset in practice is challenging, limiting its application in real-world. In addition, FedLAW ignores the variations across different layers of model for model aggregation. In contrast to FedLAW, our FedLWS calculate the shrinking factors directly through the easily available gradient and parameters of the global model, which takes into account the layer-wise differences, avoiding demanding proxy dataset and optimization. Moving beyond FedLAW, ours can be easily integrated with most of related model aggregation methods for decoupling shrinking factor and aggregation weights.

 \vspace{-1em}
\section{Background}\label{background}
Federated Learning consists of $K$ clients and a central server, where each client has its own private local dataset $\mathcal{D}_k$. FL aims to enable clients to collaboratively learn a global model for the server without data sharing. 
% The $K$ clients and the server jointly learn a global model without data sharing. 
In communication round $t$, the parameters of the global model and the client $k$'s model are denoted as $\mathbf{w}^t_g$ and $\mathbf{w}^t_k$, respectively. The workflow of the basic FL method, FedAvg \citep{fedavg}, in communication round $t$ can be described as follows:
\begin{itemize}
\item \textbf{Step 1:} Server broadcasts the parameters of global model $\mathbf{w}^t_g$ to each client;
\item \textbf{Step 2:} Each client $k$ performs $E$ epochs of local model training on private dataset $\mathcal{D}_k$ to obtain a local model $\mathbf{w}^t_k$;

% , the number of local epochs is $E$;
\item \textbf{Step 3:} Clients upload the local models to the server;
\item \textbf{Step 4:} Server merges the local models to get a new global model: $\mathbf{w}^{t+1}_g=\sum_{k=1}^K \lambda_k \mathbf{w}^t_k$, where $\lambda_k$ is the aggregation weight of the client $k$ and FedAvg sets $\lambda_k=\frac{|\mathcal{D}_k|}{\sum_{i=1}^K|\mathcal{D}_i|}$.
\end{itemize}

% the gradient of local updates for the model of the -th client
Denote $\mathbf{g}_k^t$ as the local gradient for the model of the $k$-th client during communication round $t$, where the local model $\mathbf{w}_k^t$ equals to $\mathbf{w}_g^t-\mathbf{g}_k^t$ in Step 2. Therefore, the update process for the global model can be expressed as follows:

\vspace{-1em}
\begin{equation}\label{global update}
\begin{aligned}
\vspace{-3em}
\mathbf{w}_g^{t+1}=\mathbf{w}_g^t-\eta_g\sum_{k=1}^K\lambda_k\mathbf{g}_k^t,\mathrm{~s.t.~} \lambda_k\geq0,\|\boldsymbol{\lambda}\|_1=1,
\end{aligned}
\end{equation}

where $\boldsymbol{\lambda}=[\lambda_1,..., \lambda_K]$ is the aggregation weights and $\eta_g$ is the global learning rate.
FedLAW \citep{fedlaw} identifies the global weight shrinking phenomenon during the model aggregation in FL, and introduces a global shrinking factor $0 <\gamma <1$, which improves generalization. The model aggregation process is then reformulated as follows:
\vspace{-1em}

\begin{equation}\label{init}
\vspace{-1em}
\begin{aligned}
\mathbf{w}^{t+1}_g=\gamma^t\sum_{k=1}^K \lambda_k \mathbf{w}^t_k,\text{ s.t. }1>\gamma^t>0,\lambda_k\geq0,\|\boldsymbol{\lambda}\|_1=1,
\end{aligned}
\end{equation}

where the $\gamma^t$ is the shrinking factor in the communication round $t$.
Therefore, the right of Eq. \ref{global update} can be rewritten as:

\begin{equation}\label{regularization}
\begin{aligned}
\mathbf{w}_g^{t+1}=\gamma^t(\mathbf{w}_g^t-\eta_g\mathbf{g}_g^t)=\mathbf{w}_g^t-\gamma^t\eta_g\mathbf{g}_g^t-(1-\gamma^t)\mathbf{w}_g^t,
\end{aligned}
\end{equation}

where $\mathbf{g}_g^t = \sum_{k=1}^K\lambda_k\mathbf{g}_k^t$ is the global gradient. 
We refer to the $\gamma^t\eta_g\mathbf{g}_g^t$  as the global averaged gradient (optimization term) and $(1-\gamma^t)\mathbf{w}_g^t$ is the pseudo gradient of global weight
shrinking (regularization term). Therefore, the value of $\gamma^t$ determines the strength of regularization, with smaller values of $\gamma^t$ corresponding to stronger regularization (weight shrinking) and smaller gradient updates.
It can be seen that the global weight shriking is analogous to weight decay, however they are are distinct. More discussions about their differences can be found in Section \ref{ablation study} and Appendix \ref{relation_wd}.

FedLAW \citep{fedlaw} learns the optimal shrinking factor $\gamma$ and the aggregation weights $\boldsymbol{\lambda}$ on a proxy dataset, which is assumed to have the same distribution as the test global dataset.
Nevertheless, the utilization of a proxy dataset may raise privacy-related concerns, and obtaining a dataset with a distribution identical to the test data in real-world applications can be challenging.
% Moreover, numerous previous studies \citep{pfedla,ccvr,ldawa} have illustrated that significant differences often exist among distinct layers of the model during the training process of FL, especially when training with non-IID data.
Moreover, numerous previous studies \citep{pfedla,ccvr,ldawa} have illustrated that the different layers of deep neural networks exhibit different levels of heterogeneity during the training process of FL, especially when training with non-IID data.
However, FedLAW\citep{fedlaw} does not consider the discrepancy among different layers, assigning a single shrinking factor $\gamma$ to all layers of the model, which could impact the global model performance negatively, as an inappropriate value of $\gamma$ can affect the generalization.
Therefore, this paper aims to address the challenge of learning an appropriate $\gamma$ without relying on a proxy dataset, while also adaptively determining the corresponding $\gamma$ value for each layer.

\vspace{-1em}
\section{Method}\label{method}

In this section, we propose Federated Learning with Adaptive Layer-wise Weight Shrinking (FedLWS), which dynamically computes the layer-wise shrinking factor for each layer of the global model. Unlike prior approaches, FedLWS does not rely on a proxy dataset, thus addressing the privacy and practical deployment issues present in traditional Federated Learning (FL) systems. Additionally, our method seamlessly integrates into existing FL methods to enhance performance.

\subsection{Federated
Learning with Adaptive Weight Shrinking}
We first analyze the balance between regularization and optimization. From Equation \ref{regularization}, we can see that the shrinking factor $\gamma^t$ governs the trade-off between regularization (the model tends to keep its current weights) and optimization (the model tends to update its weights according to the gradients). A smaller shrinking factor $\gamma^t$ imposes stronger regularization, while a larger $\gamma^t$ favors optimization by allowing larger updates. The challenge lies in finding an appropriate $\gamma^t$ that dynamically adapts to the training process. 
% Therefore, we aim to investigate what the optimal value of $\gamma^t$ is during training and what factors influence this optimal value.
An ideal $\gamma^t$ should be able to maintain a balance between the regularization term and the optimization term. 
To address this problem, we introduce an intuitive hypothesis: the balance between the regularization term and the optimization term should be related to the variance of the gradients of local client models. Specifically, if local client models agree less with each other in terms of the gradient updates (i.e., higher variance of their gradients), the global model should be more conservative in terms of updating its weights, meaning that a stronger regularization is needed (i.e., a smaller $\gamma_t$). Conversely, if the gradient variance is smaller, one should update the global model's weights more aggressively according to the gradients (less regularization and higher $\gamma_t$). Specifically, denoting the gradient variance of the local models as $\tau^t$, \red{we formulate it as follows:
\begin{equation}\label{tau_define}
\begin{aligned}
{\tau^t}={\frac{1}{K}\sum_{k=1}^{K}\|\mathbf{g}_k^t-\mathbf{g}_{mean}^t\|}, \; \mathbf{g}_{mean}^t=\frac{1}{K}\sum^K_{k=1}\mathbf{g}_k^t
\end{aligned}
\end{equation}
where $\mathbf{g}_k^t$ is the local gradient of the $k$-th client. 
% Inspired by previous study \citep{fedlaw}, we use the ratio of the norms of regularization term and optimization term to quantify the balance, which can be denoted as: 
Therefore, given the discussion above, we assume the gradient variance of the local models is proportional to the balance between the regularization and optimization term, expressed as:
\begin{equation}\label{ratio}
\begin{aligned}
\tau^t \red{\propto}r^t, r^t= \frac{(1-\gamma^t)\|\mathbf{w}_g^t\|}{\gamma^t \|\eta_g\mathbf{g}_g^t\|}, \mathrm{~s.t.~} 0<\gamma^t<1,
\end{aligned}
\end{equation}}

To empirically substantiate this hypothesis, we conducted the experiments employing the test dataset to optimize $\gamma^t$, which is impractical in real-world cases but for demonstrating purpose here.
After the optimal value of $\gamma^t$ is obtained, we compute the right-hand-side term of Equation~\ref{ratio} and denote its value as $r^t$.
For the left-hand-side term of Equation~\ref{ratio}, $\tau^t$, we compute its value with Equation~\ref{tau_define}.
To test the correlation between $r^t$ and  $\tau^t$ with different magnitudes of data heterogeneity, we adopt Dirichlet sampling $Dir_{\alpha}$ to simulate client heterogeneity. A smaller value of $\alpha$ indicates a higher degree of non-IID data distribution. 
The experimental results are illustrated in Figure \ref{ratio_tau}. It can be observed that as the degree of data heterogeneity diminishes, both $r$ and $\tau$ exhibit a corresponding reduction. This suggests that, in the case of IID data, $\gamma$ tends to facilitate larger updates. More \red{exploration and theory analysis} can be found in Section \ref{ablation study}, \red{Appendix \ref{appsec_ratio_tau} and \ref{theory}}.
We also compute Pearson correlation coefficient \citep{pearson} between  $r$ and $\tau$, which is 0.860. Together with the analysis of Figure \ref{ratio_tau}, it validates our hypothesis. 

We further introduce the scaling term $\beta$ as a hyperparameter:
\begin{equation}\label{cal_betatau}
\begin{aligned}
\beta \tau^t=\frac{(1-\gamma^t)\|\mathbf{w}_g^t\|}{\gamma^t \|\eta_g\mathbf{g}_g^t\|}, \mathrm{~s.t.~} 0<\gamma^t<1.
\end{aligned}
\end{equation}

Combining Equation~\ref{cal_betatau} with Equation~\ref{tau_define}, we can readily deduce the computational expression for the shrinking factor $\gamma^t$ as follows:
\begin{equation}\label{gamma}
\begin{aligned}
\gamma^t = \frac{\|\mathbf{w}_g^t\|}{\beta \tau^t \|\eta_g^t \mathbf{g}_g^t\| + \|\mathbf{w}_g^t\|}.
\end{aligned}
\end{equation}

During the model aggregation process, both $\textbf{w}^t_g$ and $\eta_g \mathbf{g}^t_g=\widehat{\textbf{w}}^{t+1}_{g}-{\textbf{w}}^{t}_{g}$ are known. Therefore, after calculating $\tau^t$ via Equation \ref{tau_define}, the value of shrinking factor $\gamma^t$ can be directly obtained through Equation \ref{gamma}. Equation \ref{gamma} enables a comprehensive consideration of both the global gradient and model parameters, ensuring a balanced interplay between the optimization and regularization terms during the FL training process and eliminating the need for optimization using additional proxy datasets.

FedLWS is conducted after model aggregation, making it easily be combined with other Federated Learning methods. For example, when combined with FedAvg, the first four steps of our method align with the \textbf{Step 1 $\sim$ Step 4} in Section \ref{background}. Following this, our method computes the shrinking factors and applies layer-wise weight shrinking to the aggregated model. In communication round $t$, we use $\widehat{\textbf{w}}^{t+1}_{g}=\sum_{k=1}^K \lambda_k \mathbf{w}^t_k$ represent the aggregated model before weight shrinking and denote the aggregated model after weight shrinking as ${\textbf{w}}^{t+1}_{g}$.

\begin{figure}[t]
    \vspace{-1em}
    \centering
    
    \subfigure[\red{Gradient variance $\tau$} and ratio $r$ under different degrees of data heterogeneity.  \label{ratio_tau}]{
        \begin{minipage}[t]{0.45\linewidth}
            \includegraphics[width=\linewidth]{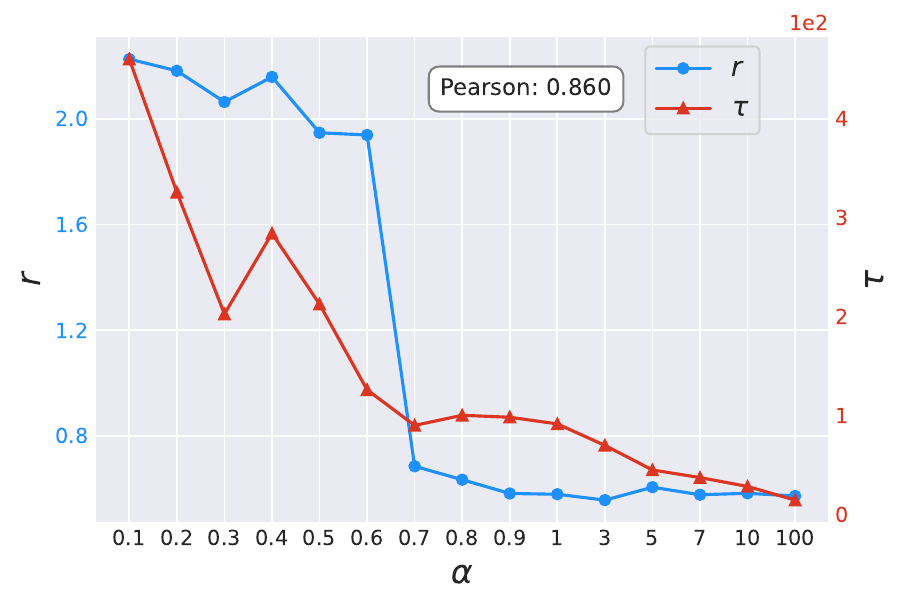}
        \end{minipage}%
    }%
    \hspace{2em}  % Adjust horizontal spacing between subfigures here
    \subfigure[Accuracy curves with different fixed $\gamma$. \label{layerfix}]{
        \begin{minipage}[t]{0.42\linewidth}
            \includegraphics[width=\linewidth]{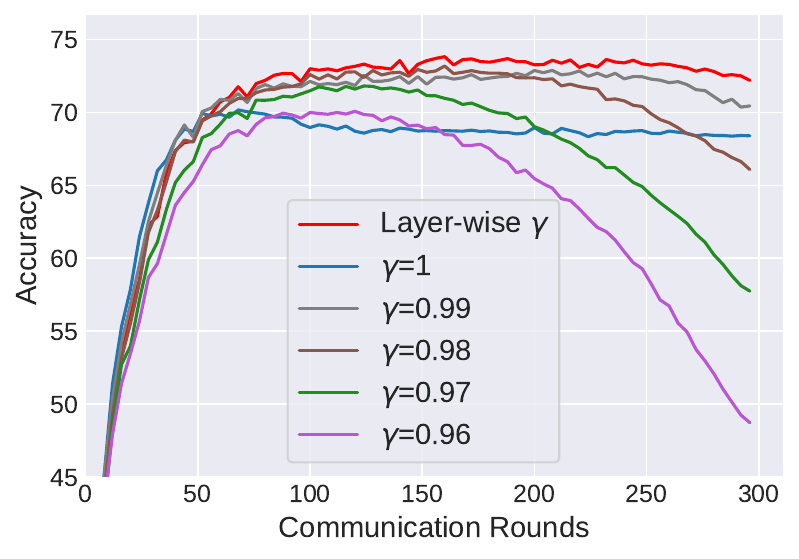}
        \end{minipage}%
    }%
    
    \vspace{-1em}
    \caption{\small{Empirical observations on \red{CIFAR-10 with CNN as the backbone; see more results in Appendix \ref{appsec_ratio_tau}.} In \subref{ratio_tau}, $\alpha$ is the degree of data heterogeneity, with smaller $\alpha$ indicating more heterogeneous data, it illustrates the correlation between $\tau$ and ratio. \subref{layerfix} indicates that layer-wise $\gamma$ can enhance the global model’s performance.}}
    \vspace{-1em}
\end{figure}

\begin{figure}[t]
% \vskip 0.2in
\begin{center}
\centerline{\includegraphics[width=
0.85\linewidth]{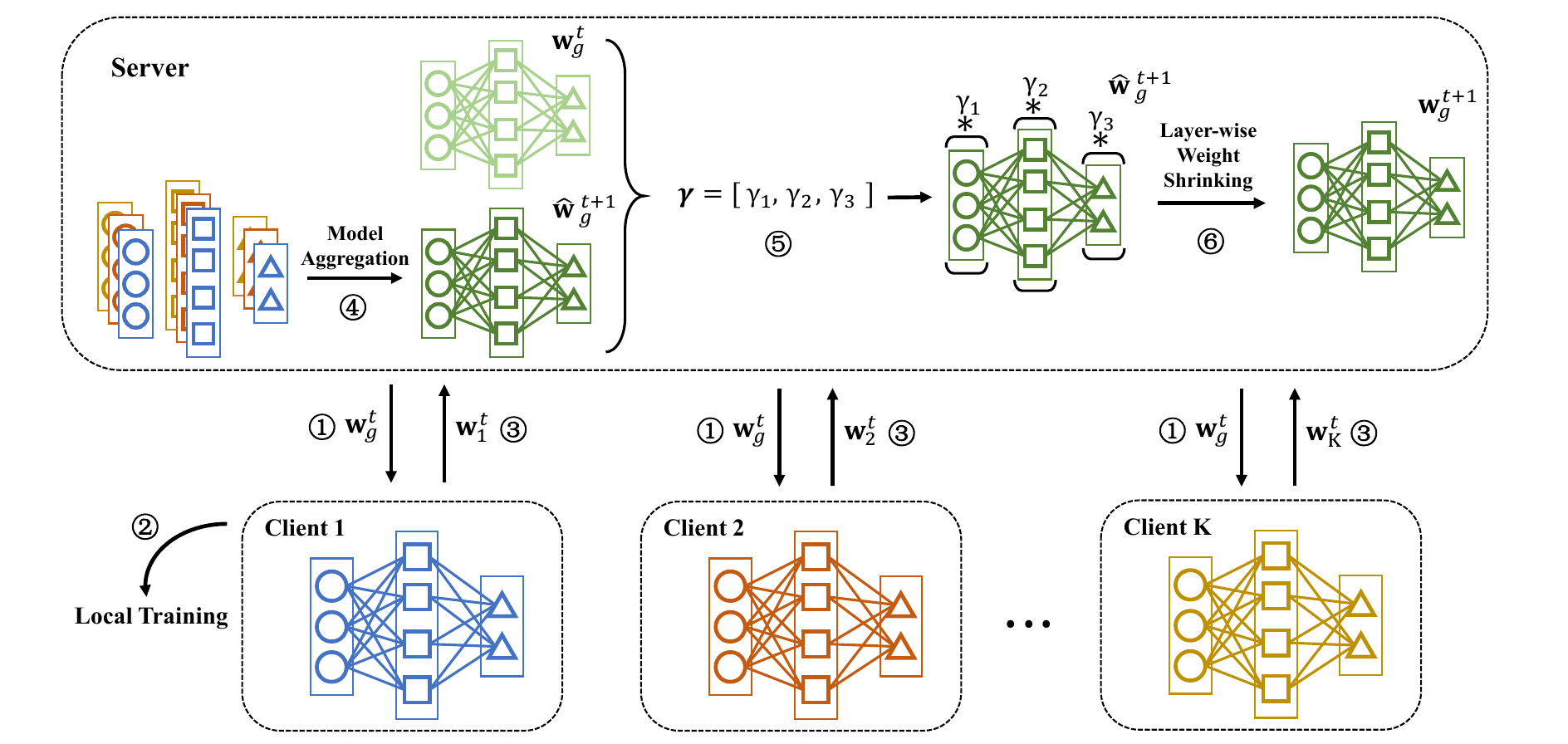}}
\caption{\small{The overview of FedLWS. \ding{172} server broadcasts the parameters of the global model to each client; \ding{173} clients perform local training on their own dataset; \ding{174} clients upload the local models to the server; \ding{175} the server conduct model aggregation to generate new global model $\widehat{\textbf{w}}^{t+1}_g$; \ding{176} the server calculates the layer shrinking factors $\boldsymbol{\gamma}$; \ding{177}: the serve conduct layer-wise weight shrinking and obtain the final global model $\textbf{w}^{t+1}_g$.}}

\label{overview}
\vspace{-2.5em}
\end{center}
% \vskip -0.2in
\end{figure}

% \vspace{-1em}

% \vspace{-1em}
\subsection{Layer-wise Extension of FedLWS \label{fedlws}}

% \textbf{Layer-wise shrinking factors.}
Now, we investigate the effect of layer-wise factors on the performance of the global model.
Previous studies have demonstrated divergence across various layers of deep neural networks \citep{pfedla,ldawa,fedlama}. Therefore, applying a single shrinking factor, $\gamma$, to the entire model may not fully capture the benefits of global weight reduction. We validate this through experiments, as shown in Figure \ref{layerfix}, the model we used is a 5-layer CNN, the layer-wise $\gamma$ refers to employing distinct shrinking factors for each layer of the model, with values uniformly decreasing from 1 to 0.96. Figure \ref{layerfix} illustrates that considering $\gamma$ in a layer-wise manner, rather than using a single $\gamma$ for the entire model, can further enhance the model's generalization performance.

% \textbf{Layer-wise weight shrinking.} 
% \textcolor{red}{
Considering that each layer in deep neural networks may vary differently, it might be beneficial to design a respective regularization strength for each layer of the global model. 
% As demonstrated in Figure \ref{layerfix}, the layer-wise shrinking factor outperforms every model-wise shrinking factor.
Therefore, we aim to calculate the shrinking factor for each layer, using $\gamma^t_l$ to represent the shrinking factor of the $l$-th layer of the global model in communication round $t$. Benefiting from the Equation \ref{tau_define} and \ref{gamma}, we can achieve the goal by calculating $\gamma^t_l$ as follows:

\vspace{-1em}
\begin{equation}\label{layer_gamma}
\begin{aligned}
% {\gamma_l^t}&=\frac{\beta\tau_l^t\|\textbf{w}^t_{gl}\|}{\|\eta_g\mathbf{g} ^t_{gl}\|+\beta\tau_l^t \|\textbf{w}^t_{gl} \|},
\gamma^t_l = \frac{\|\mathbf{w}_{gl}^t\|}{\beta \tau^t_l \|\eta_g^t \mathbf{g}_{gl}^t\| + \|\mathbf{w}_{gl}^t\|}, \quad
{\tau_l^t}&={\frac{1}{K}\sum_{k=1}^{K}\|\mathbf{g}_{kl}^t-\mathbf{g}_{meanl}^t\|},
\end{aligned}
\end{equation}
where $\mathbf{g} ^t_{gl}$ and $\textbf{w}^t_{gl}$ represent the gradients and parameters of the $l$-th layer of the global model. Through Equation \ref{layer_gamma}, the server obtains the layer-wise shrinking factors $[\gamma^t_1,\gamma^t_2,...,\gamma^t_L]$, and then conducts layer-wise weight shrinking to generate the new global model $\textbf{w}^{t+1}_g$ as follows:
% $\textbf{w}^{t+1}_g=\gamma_L(\widehat{\textbf{w}}^{t+1}_{gL})\circ...\circ\gamma_2(\widehat{\textbf{w}}^{t+1}_{g2})\circ \gamma_1(\widehat{\textbf{w}}^{t+1}_{g1})$.
\begin{equation}\label{weight shrinking}
\begin{aligned}
\textbf{w}^{t+1}_g=\gamma^t_L(\widehat{\textbf{w}}^{t+1}_{gL})\circ...\circ\gamma^t_2(\widehat{\textbf{w}}^{t+1}_{g2})\circ \gamma^t_1(\widehat{\textbf{w}}^{t+1}_{g1}),
\end{aligned}
\end{equation}
where the "$\circ$" denotes the connection between different layers within the model, $\widehat{\textbf{w}}^{t+1}_{gl}$ is the $l$-th layer’s parameter of $\widehat{\textbf{w}}^{t+1}_{g}$. Having introduced our adaptive way of setting $\gamma^t_l$, we give an overview of our method, FedLWS, in Figure \ref{overview}. The pseudo-code of FedLWS is shown in Appendix \ref{pseudo}, Algorithm \ref{alg 1}, \red{where we highlight the additional steps required by our method compared to FedAvg.}

\vspace{-1em}
\subsection{Discussions}

\textbf{Privacy.}
Our proposed FedLWS is more privacy-preserving and adaptable in practice compared to some of the previous works \citep{ccvr,feddf,fedlaw}. FedLWS directly calculates the value of the shrinking factors $\gamma$ without the need for a proxy dataset or a fine-tuning step. The calculation of the shrinking factor $\gamma$ is based on the parameters and the gradient updates of the global model, which can be easily obtained by the server during the training process, without the need to transmit any additional information. Hence, our proposed FedLWS avoids data leakage and is more  applicable to real-world scenarios. 
% This indicates that our method is more privacy-preserving compared to some of the previous works \citep{ccvr,fedlaw}.

\textbf{Modularity.}
Our proposed FedLWS can be a plug-and-play module in many existing FL methods to further improve their performance. It has a broad range of applications. For the FL methods adjusting the client-side model \citep{fedprox,feddyn,moon}, FedLWS operates on the server-side, therefore it can be easily integrated with these client-side adjustment methods. Moreover, for the methods adjusting the server-side model, our FedLWS is conducted after the model aggregation, and most previous works \citep{fedavg,feddisco,fedavgm} commonly normalize the aggregation weights (the sum of weight is 1), which is orthogonal to our method. Hence, our FedLWS can also be incorporated with them by conducting weight shrinking after model aggregation.
While both ours and FedLAW \citep{fedlaw} consider the shrinking factor, FedLAW lacks flexibility in combination with other methods, as it learns not only the shrinking factor but also the aggregation weight, which is non-orthogonal with the previous work on model aggregation adjustment. Moreover, it requires a proxy dataset, a necessity not present in ours. More discussions can be found in Appendix \ref{disws}.

\vspace{-1em}
\section{Experiments} \label{experiment}

\vspace{-1em}
\subsection{Experiment Setup}\label{exp_setup}

\textbf{Dataset and Baselines.} In this paper, we consider four image classification datasets: CIFAR-10 \citep{cifar}, CIFAR-100 \citep{cifar}, FashionMNIST \citep{Fashion-mnist}, and Tiny-ImageNet \citep{tinyimagenet}; and text classification datasets: AG News \citep{agnews}, \red{Sogou News \citep{agnews}, and Amazon Review \citep{amazon}}. We compare our method with six representative baselines. Among these, 1) FedAvg \citep{fedavg} is the standard algorithm of Federated Learning; 2) FedProx \citep{fedprox} and FedDyn \citep{feddyn} focus on the adjustment of the local model; 3) Feddisco \citep{feddisco} and L-DAWA \citep{ldawa} focus on the adjustment of the aggregation scheme. Our approach can be easily integrated with the methods mentioned above. Additionally, we also demonstrate the performance of FedLAW \citep{fedlaw}. However, since it leverages additional data for fine-tuning that other methods do not, we present it only for reference. In the experiments, our method FedLWS is the layer-wise approach described in Section \ref{fedlws} unless indicated specifically.\footnote{The source code is available at \url{https://github.com/ChanglongShi/FedLWS}}

\begin{table}[t]
  \begin{center}
  \vspace{-2em}
    \caption{Top-1 test accuracy (\%) on four datasets with three different degrees of heterogeneity.}
    \vspace{0.5em}
    \label{main_res}
    \makebox[\linewidth][c]{
    \resizebox*{\linewidth }{!}{
    \renewcommand{\arraystretch}{1.1}
    \begin{tabular}{l c c c c c c c c c c c c c}
    \toprule
       \multicolumn{1}{l}{\textbf{Dataset}}&\multicolumn{3}{c}{FashionMNIST}&\multicolumn{3}{c}{CIFAR-10}&\multicolumn{3}{c}{CIFAR-100}&\multicolumn{3}{c}{Tiny-ImageNet}&\multirow{2}{*}{\textbf{Average}}\\
       \cmidrule(lr){2-4} \cmidrule(lr){5-7}\cmidrule(lr){8-10}\cmidrule(lr){11-13}

       \multicolumn{1}{l}{\textbf{Heterogeneity}}& $\alpha$=100& $\alpha$=0.5& $\alpha$=0.1& $\alpha$=100& $\alpha$=0.5& $\alpha$=0.1& $\alpha$=100& $\alpha$=0.5& $\alpha$=0.1& $\alpha$=100& $\alpha$=0.5& $\alpha$=0.1&\\
      \midrule
        {FedLAW \citep{fedlaw} }&{89.78}&{89.23}&{87.52}&{81.30}&{75.27}&{64.76}&{41.05}&{37.56}&{34.59}&{37.20}&{33.49}&{29.13}&{58.41}\\
        \midrule
         FedAvg \citep{fedavg} &90.44&90.04&88.62&76.01&74.47&61.04&41.46&37.21&36.71&36.31&34.43&29.44&58.02\\
         \textbf{+LWS (Ours)}&\textbf{90.99}&\textbf{90.33}&\textbf{88.99}&\textbf{76.85}&\textbf{75.63}&\textbf{64.08}&\textbf{42.42}&\textbf{41.03}&\textbf{37.70}&\textbf{37.16}&\textbf{35.12}&\textbf{31.34}&\textbf{59.30}\\
         \midrule
        FedDisco \citep{feddisco} &90.68&89.90&88.54&75.40&74.72&62.86&41.46&37.28&36.46&35.92&34.29&29.39&58.16\\
        \textbf{+LWS (Ours)}&\textbf{90.95}&\textbf{90.43}&\textbf{89.39}&\textbf{76.34}&\textbf{75.41}&\textbf{66.91}&\textbf{42.68}&\textbf{41.68}&\textbf{36.93}&\textbf{37.30}&\textbf{34.92}&\textbf{31.53}&\textbf{59.54}\\
         \midrule
        L-DAWA \citep{ldawa} &89.97&86.01&79.03&75.37&75.61&\red{62.87}&42.38&39.81&\red{36.31}&33.55&31.43&\red{30.02}&49.30\\
        \textbf{+LWS (Ours)}&\textbf{91.05}&\textbf{86.63}&\textbf{86.11}&\textbf{76.21}&\textbf{76.77}&\red{\textbf{65.13}}&\textbf{42.97}&\textbf{40.40}&\red{\textbf{37.04}}&\textbf{36.83}&\textbf{33.77}&\red{\textbf{32.24}}&\textbf{51.19}\\
         \midrule
       
        FedProx \citep{fedprox} 
        & \red{91.24} & \red{90.69} & \red{88.78} & \red{73.96} & \red{73.27} & \red{60.62} & \red{38.15} & \red{39.35} & \red{34.60} & \red{35.03} & \red{34.32} & \red{29.37} & \red{57.45} \\
        \textbf{+LWS (Ours)} 
        & \red{\textbf{91.35}} & \red{\textbf{91.24}} & \red{\textbf{89.25}} & \red{\textbf{74.34}} & \red{\textbf{74.55}} & \red{\textbf{62.54}} & \red{\textbf{38.64}} & \red{\textbf{39.93}} & \red{\textbf{35.37}} & \red{\textbf{35.29}} & \red{\textbf{34.98}} & \red{\textbf{30.68}} & \red{\textbf{58.18}} \\
% \hline
         \midrule
        FedDyn \citep{feddyn} &90.29&88.57&87.84&77.92&74.61&56.37&41.04&44.80&36.92&34.32&32.80&27.32&57.73\\
        \textbf{+LWS (Ours)}&\textbf{90.69}&\textbf{89.48}&\textbf{88.26}&\textbf{78.90}&\textbf{77.33}&\textbf{61.61}&\textbf{46.14}&\textbf{46.38}&\textbf{37.22}&\textbf{34.88}&\textbf{33.20}&\textbf{27.74}&\textbf{59.32}\\
    \bottomrule
    
    \end{tabular}
    }
    }
  \end{center}
   \vspace{-2em}
  
\end{table}

\begin{minipage}[t]{\textwidth}
\vspace{-1em}
% \hspace{1em}
\begin{minipage}[t]{0.48\textwidth}
\makeatletter\def\@captype{table}
\caption{\red{Results on text classification datasets.}}
\label{nlp}
\renewcommand{\arraystretch}{1.2}
\vspace{0.5em}
\scalebox{0.52}{\begin{tabular}{c|cccccc}
    \toprule
        \multirow{2}{*}{\textbf{Method}} & \multirow{2}{*}{\textbf{With LWS?}} & \multicolumn{2}{c}{\textbf{AG News}} & \multicolumn{2}{c}{\textbf{Sogou News}} & \textbf{Amazon Review} \\
        \cmidrule(lr){3-4} \cmidrule(lr){5-6} \cmidrule(lr){7-7}
        & & $\alpha = 0.1$ & $\alpha = 0.5$ & $\alpha = 0.1$ & $\alpha = 0.5$ & Feature Shift \\
    \midrule
        \multirow{2}{*}{FedAvg} & $\mathbf{\times}$ & 73.43 & 70.37 & 87.68 & 91.53 & 88.15 \\
                                & $\mathbf{\sqrt{}}$ & \textbf{74.96} & \textbf{72.32} & \textbf{90.56} & \textbf{92.76} & \textbf{88.62} \\
    \midrule
        \multirow{2}{*}{FedProx} & $\mathbf{\times}$ & 65.07 & 74.56 & 88.60 & 92.28 & 88.24 \\
                                 & $\mathbf{\sqrt{}}$ & \textbf{75.24} & \textbf{77.18} & \textbf{90.17} & \textbf{93.10} & \textbf{88.75} \\
    \bottomrule
\end{tabular}}
\end{minipage}
\hspace{0.5em}
\begin{minipage}[t]{0.48\textwidth}
\makeatletter\def\@captype{table}
\caption{\red{Average aggregation execution time (Sec) across different model structures.}}
\vspace{0.4em}
\renewcommand{\arraystretch}{1.3}
\scalebox{0.6}{\begin{tabular}{lccccc}
    \toprule
    \textbf{Method} & \textbf{CNN} & \textbf{ResNet20} & \textbf{ViT} & \textbf{WRN56\_4} & \textbf{DenseNet121} \\
    \midrule
    FedAvg & 0.019 & 0.10 & 0.18 & 0.561 & 1.359 \\
    % \textbf{FedLWS (Ours)} & \textbf{0.035} & \textbf{0.12} & \textbf{0.21} & \textbf{0.83} & \textbf{1.75} \\
    FedLAW & 4.830 & 7.11 & 9.80 & 20.08 & 27.25 \\
    \textbf{FedLWS (Ours)} & \textbf{0.035} & \textbf{0.12} & \textbf{0.21} & \textbf{0.832} & \textbf{1.756} \\
    \bottomrule
\end{tabular}
}
\label{time}
\end{minipage}
\vspace{0.5em}
\end{minipage}

\textbf{Federated Simulation.} 
 To emulate the FL scenario, we randomly partition the training dataset into $K$ groups and assign group $k$ to client $k$. Namely, each client has its local training dataset. We reserve the testing set on the server-side for evaluating the performance of global model. In practical FL scenarios, the clients often exhibit heterogeneity, leading to non-IID characteristics among their data.  In this paper, we employ Dirichlet sampling $Dir_{\alpha}$ to synthesize client heterogeneity, it is widely used in FL literature \citep{fedma, dirc, feddisco}. The smaller the value of $\alpha$, the greater the non-IID.
 We apply the same data synthesis approach to all methods for a fair comparison.
 More implementation details can be found in Appendix \ref{app_expdetail}.

% \vspace{-1em}
\subsection{Performance Evaluation}\label{perfoeva}
\vspace{-0.5em}

\textbf{Performance and Modularity.} In this section, we consider six representative methods and report the test accuracy on all datasets before and after applying our FedLWS under different heterogeneity settings in Table \ref{main_res}. 
It can be observed that the application of FedLWS leads to increased accuracies for all methods across various datasets and heterogeneity settings, highlighting the effectiveness of our proposed method.
This is particularly inspiring because FedLWS necessitates no modification to the original federated training process. The accuracy enhancements can be easily attained by simply post-processing the aggregated global model. 
As shown in Table \ref{main_res}, our method performs better on more complex or heterogeneous data. \red{This can be explained as follows:
FedAvg already achieves strong results on relatively simple tasks or datasets with IID distributions \citep{fedprox, feddisco}. In such scenarios, the differences between client models are relatively small, and consequently, the $\gamma$ computed using Equation \ref{gamma} tends to be closer to 1. This means that FedLWS's behavior aligns more closely with the baseline under these conditions.
However, on datasets with greater complexity or heterogeneity, client model differences become more pronounced. FedLWS effectively addresses these differences through layer-wise weight shrinking, resulting in improved global model performance.
Furthermore, it can be observed that FedLAW performs better on the CIFAR-10 dataset and, in some scenarios, even surpasses our method. One possible explanation is the proxy dataset used in our experiments, which contains 200 samples, consistent with the original settings in \citep{fedlaw}.
For CIFAR-10, this provides 20 samples per class, providing sufficient information to optimize the shrinking factor effectively.
In contrast, for datasets like CIFAR-100 and TinyImageNet, the proxy dataset contains only 2 and 1 sample per class, respectively. This limited representation makes it difficult for FedLAW to train an optimal shrinking factor, which may explain the variations in its performance across different scenarios.}
To verify that our method can also be applied to text modality, we conducted experiments on NLP datasets under different heterogeneity settings. \red{Table \ref{nlp}} shows that FedLWS still consistently improves the baselines on text modality.

\textbf{Computation Efficiency.} In Table \ref{time}, we show the aggregation execution time of FedAvg, FedLWS, and the closely related work FedLAW \citep{fedlaw} \red{across various model architectures.} For FedLAW, the proxy dataset contains 200 samples and the server epoch is 100 (consistent with \citep{fedlaw}). It can be observed that, in comparison to FedLAW, our FedLWS requires significantly less execution time, as FedLAW necessitates an additional proxy dataset for fine-tuning, which we do not require. \red{Although larger models may slightly increase the computational load, our method remains markedly more efficient than optimization-based alternatives, further demonstrating the efficiency of our approach.}

\begin{minipage}[t]{\textwidth}
% \vspace{-1.5em}
\vspace{-1em}
\begin{minipage}[t]{0.45\textwidth}
\makeatletter\def\@captype{table}
\caption{Comparison of layer-wise and model-wise weight shrinking effects.}
\label{layervswhole}
\renewcommand{\arraystretch}{1.4}
\vspace{0.1em}
\scalebox{0.7}{\begin{tabular}{lcccc}
    \toprule
       {\textbf{Dataset}}&\multicolumn{2}{c}{CIFAR-10}&\multicolumn{2}{c}{CIFAR-100}\\
       \cmidrule(lr){2-3} \cmidrule(lr){4-5}
       % Dataset&\multicolumn{2}{c}{Cifar-10}&\multicolumn{2}{c}{Cifar-100}\\
       % \cmidrule(lr){2-3} \cmidrule(lr){4-5}
       {\textbf{Heterogeneity}}& \multicolumn{1}{c}{$\alpha$=100} & \multicolumn{1}{c}{$\alpha$=0.1}& \multicolumn{1}{c}{$\alpha$=100}& \multicolumn{1}{c}{$\alpha$=0.1}\\

      \midrule
        FedAvg&76.01&61.04&41.46&36.71\\%resnet
        FedLWS(Model-wise)&76.17&63.21&41.77&37.65\\%resnet
        FedLWS(Layer-wise)&\textbf{76.85}&\textbf{64.08}&\textbf{42.42}&\textbf{37.70}\\%resnet
    \bottomrule
\end{tabular}}
\end{minipage}
\hspace{0.3em}
\begin{minipage}[t]{0.54\textwidth}
\makeatletter\def\@captype{table}
\caption{The performance of compared methods with different model architectures.}
\renewcommand{\arraystretch}{1}
\vspace{0.3em}
\scalebox{0.7}{\begin{tabular}{lccccc}
   \toprule
       % Model&\multicolumn{1}{c}{CIFAR-10}&\multicolumn{2}{c}{CIFAR-100}\\
       % \cmidrule(lr){2-3} \cmidrule(lr){4-5}
       \textbf{Model}&ResNet20&WRN56\_4&DenseNet121&ViT&Dataset\\
      \midrule
         FedLAW&\red{75.72}&80.46&86.43&51.20\\
         
         FedAvg&75.07&78.97&86.14&51.31&{CIFAR-10 }\\%resnet
         
        +LWS&\textbf{76.17}&\textbf{81.29}&\textbf{86.63}&\textbf{54.14}\\%resnet
        \midrule
        FedLAW&36.53&\red{36.60}&55.36&22.03\\
        FedAvg&36.71&39.71&56.59&25.60&{CIFAR-100 }\\%resnet \multirow{3}{*}{CIFAR-100 }
        +LWS&\textbf{38.01}&\textbf{42.37}&\textbf{57.41}&\textbf{26.05}\\%resnet
    \bottomrule
\end{tabular}}
\label{backbone}
\end{minipage}
\vspace{-1em}
\end{minipage}

\begin{figure}[ht]
        
    \centering

    \subfigure[The value of model-wise and layer-wise shrinking factors during the training process ($\alpha$=0.1). \label{layerdiv}]{
        \begin{minipage}[t]{0.49\linewidth}
            \centering 
            \includegraphics[width=\linewidth]{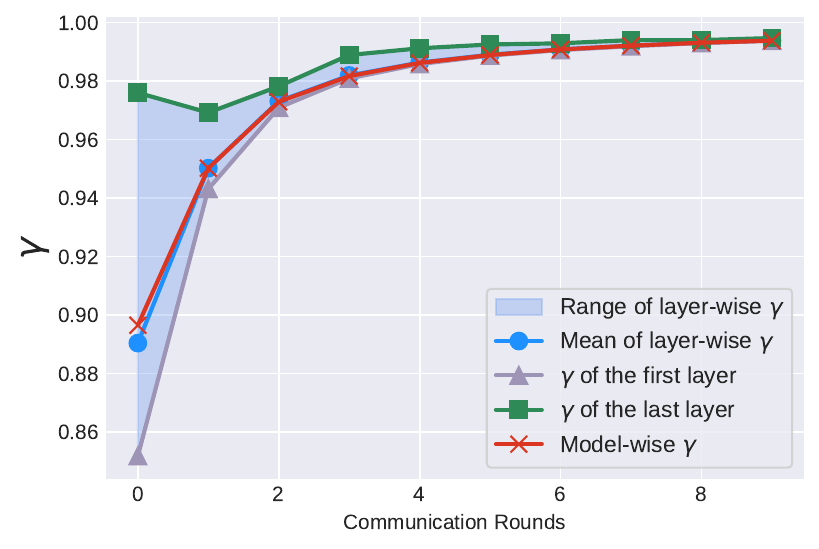}
        \end{minipage}%
    }%
    % \hspace{2em} 
    \subfigure[The histogram of final models’ parameters.\label{para}]{
        \begin{minipage}[t]{0.49\linewidth}
            \centering 
            \includegraphics[width=\linewidth]{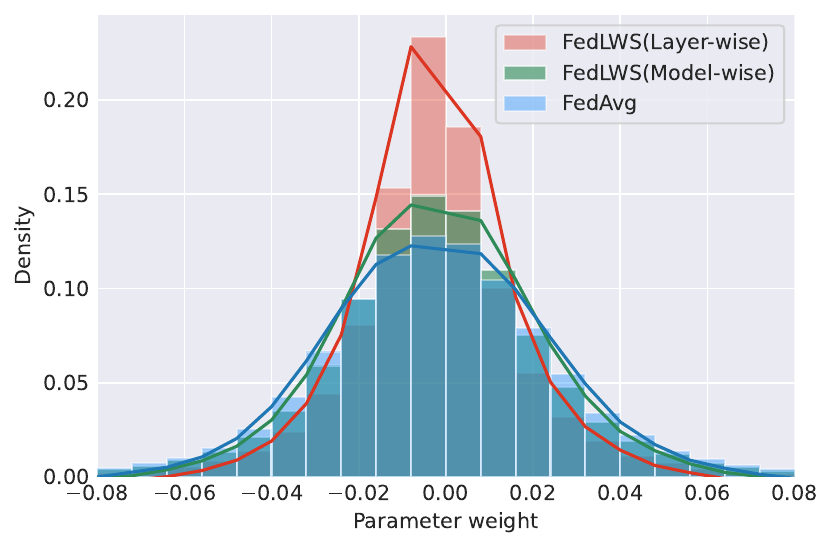}
        \end{minipage}%
    }%
    
    \setlength{\abovecaptionskip}{0.cm}
    % \vspace{0.5em}
    \caption{The comparison between layer-wise FedLWS and model-wise FedLWS (ResNet20).}
    \vspace{-2em}
    \label{layer_model}

\end{figure}

% \vspace{-1em}
\subsection{Ablation Studies \label{ablation study}}

% \vspace{-1em}
\textbf{Effects of layer-wise shrinking factor.} In this section, we conduct experiments to analyze the effectiveness of the layer-wise shrinking factor. As a point of comparison, we implement the model-wise FedLWS by computing a model-wise shrinking factor $\gamma$ for the entire model, which can be viewed as a degraded version of layer-wise FedLWS. The model-wise shrinking factor is calculated according to Equation \ref{gamma}, based on the entire model's parameters and gradient. In Table \ref{layervswhole}, taking the FedAvg as the baseline, we compare the performance of layer-wise and model-wise weight shrinking across different datasets and heterogeneity degrees. 
\red{Table \ref{layervswhole} demonstrates that layer-wise weight shrinking achieves higher model accuracy compared to model-wise weight shrinking, indicating that the layer-wise approach allows for more precise adjustments in the model aggregation process, thereby yielding improved results.}
In addition, even introducing the degraded model-wise shrinking factor into FedAvg can still achieve performance improvement, \red{suggesting the effect of shrinking factor on the global model's generalization ability.}

In Figure \ref{layerdiv}, we compare the values of model-wise $\gamma$ and layer-wise $\gamma$ during the training process in Resnet20, where we show the mean of layer-wise $\gamma$ of all layers, $\gamma$ at the first layer and $\gamma$ at the last layer due to the limited space. The variation of layer-wise $\gamma$ for each layer is deferred to \red{Figure \ref{all_layer}} in Appendix.  
It can be observed that throughout the training process, the value of model-wise $\gamma$ and the mean of layer-wise $\gamma$ are very close. 
Besides, the variance of the layer-wise $\gamma$ gradually decreases with the training process. That is to say, at the initial iterations, there is a significant difference in layer-wise $\gamma$ among different layers of the global model. As the training progresses, the gap between layers continues to diminish. 
This indicates that the difference between layer-wise $\gamma$ and model-wise $\gamma$ primarily occurs at the initial stages of training. Layer-wise $\gamma$ can better adjust the global model at the initial training period.
It can be inferred that layer-wise $\gamma$ can enhance better utilization of the global weight shrinking phenomenon and improve model generalization by assigning optimal $\gamma$ values for each layer of the global model. 
We show the histogram of the final models’ parameters in Figure \ref{para}. It can be seen that, in comparison to FedAvg and model-wise FedLWS, layer-wise FedLWS makes more model parameters close to zero, which is similar to weight decay. This may explain why FedLWS enhances the generalization ability of the global model. More results about shrinking factor can be found in Appendix \ref{app_res}.

\textbf{Effects of model architectures.} In Table \ref{backbone}, we evaluate our proposed FedLWS across various model architectures, including ResNet \citep{resnet}, Wide-ResNet (WRN) \citep{wrn}, DenseNet \citep{densenet} and Vision Transformer (ViT) \citep{vit}. The results demonstrate the effectiveness of FedLWS across different model architectures, indicating its robust performance even as the network depth or width increases.

\begin{figure}[t]
        
        \centering
        
	\subfigure[Client Number. \label{nodenum}]{
		\begin{minipage}[t]{0.34\linewidth}%%%%%%%%%note2
			\includegraphics[width=\linewidth]{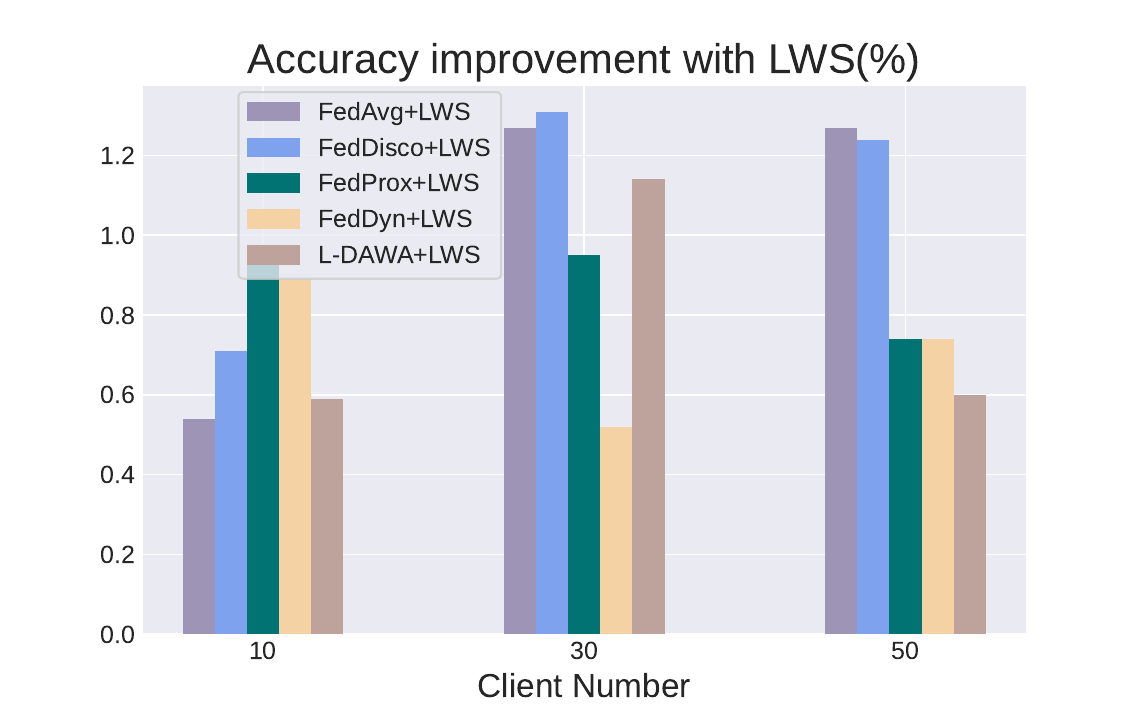}%%%%%%%%%note3
		\end{minipage}%
	}%
        \subfigure[Local Epoch.\label{localepoch}]{
		\begin{minipage}[t]{0.34\linewidth}
				\includegraphics[width=\linewidth]{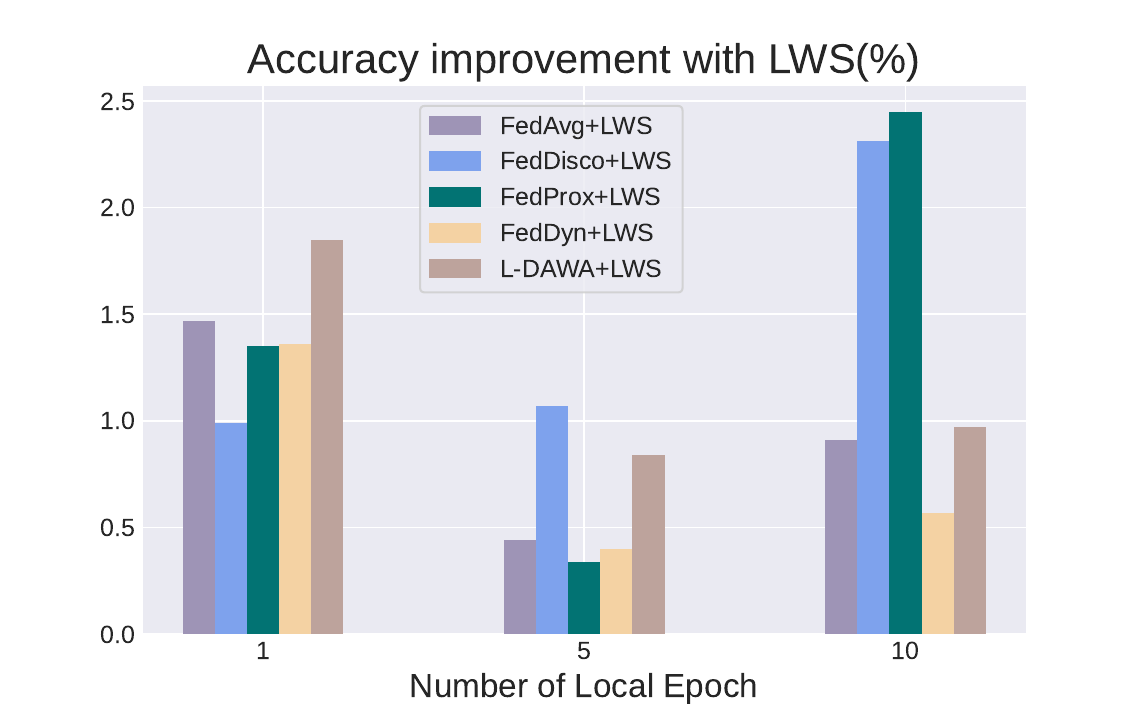}
			% \caption{FedAvg, $\alpha$=0.5.}
		\end{minipage}%
	}%
	\subfigure[Partial Participation.\label{partratio}]{
		\begin{minipage}[t]{0.34\linewidth}
				\includegraphics[width=\linewidth]{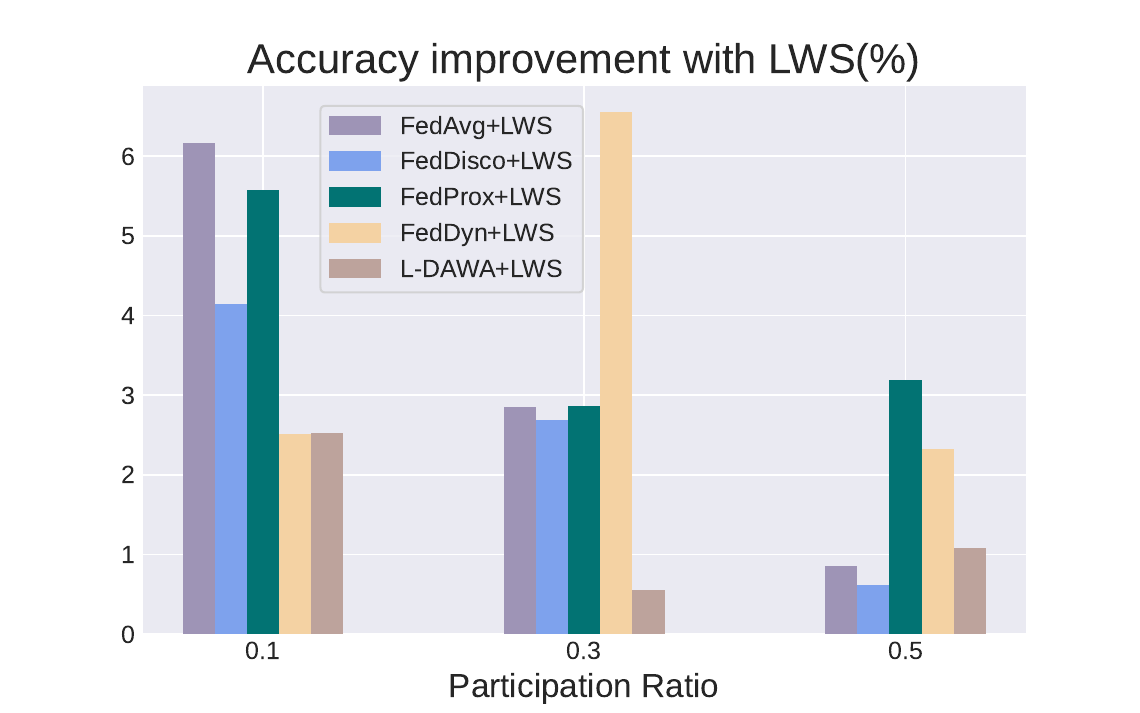}
			% \caption{FedAvg, $\alpha$=0.5.}
		\end{minipage}%
	}%
        \setlength{\abovecaptionskip}{0.cm}
        \vspace{-0.5em}
	\caption{Accuracy improvement after applying LWS under different client numbers, local epochs, and partial participation ratios.}
    % \vspace{-1em}
        \label{FLpara}
	\centering
    
        \label{FLparameter}
\vspace{-2em}
\end{figure}

\textbf{Effects of client number, local epoch, and partial participation.}
In this section, we tune three crucial parameters in FL: the number of clients $K \in$ \{10, 30, 50\}, the number of local epoch $E \in$ \{1, 5, 10\}, and partial participation ratio $R \in$ \{0.1, 0.3, 0.5\}. We show the accuracy improvement brought by FedLWS in Figure \ref{nodenum}, \ref{localepoch}, and \ref{partratio}, respectively. The experiments consistently reveal that our proposed method consistently brings performance improvement across different FL settings.

\textbf{Relation with Weight Decay.}
While our method differs from weight decay, it is still meaningful to compare it with adaptive weight decay methods. Therefore, we applied two adaptive weight decay methods, AWD \citep{awd} and AdaDecay \citep{adadecay}, to Federated Learning model aggregation process, and compared them with our approach, the result as shown in Table \ref{weight_decay}. It demonstrates that our method significantly outperforms the adaptive weight decay methods in the Federated Learning scenario. More discussions can be found in Appendix \ref{relation_wd}.

\begin{table}[t]
  \begin{center}
    \caption{Top-1 test accuracy (\%) on CIFAR-10 and CIFAR-100 with different degrees of heterogeneity. Compared with weight decay methods.}
    % \vspace{1em}
    % \renewcommand{\arraystretch}{1.5}
    \label{weight_decay}
    \makebox[\linewidth][c]{
    \resizebox*{\linewidth }{!}{
    \renewcommand{\arraystretch}{1.4}
    \begin{tabular}{l c c c c c c c}
    \toprule
       \multicolumn{1}{l}{\textbf{Dataset}} & \multicolumn{3}{c}{CIFAR-10} & \multicolumn{3}{c}{CIFAR-100} & \multirow{2}{*}{\textbf{Average}} \\
       \cmidrule(lr){2-4} \cmidrule(lr){5-7} 
       \multicolumn{1}{l}{\textbf{Heterogeneity}} & IID($\alpha$=100) & NIID($\alpha$=1) & NIID($\alpha$=0.1) & IID($\alpha$=100) & NIID($\alpha$=1) & NIID($\alpha$=0.1) & \\
      \midrule
      FedAvg &76.01&75.18&61.04&41.46&41.62&36.71&55.34\\
      +AdaDecay \citep{adadecay} &76.21 ($\uparrow$0.20)&75.33 ($\uparrow$0.15)&61.34 ($\uparrow$0.30)&42.15 ($\uparrow$0.69)&41.73 ($\uparrow$0.11)&36.98 ($\uparrow$0.27)&55.62 ($\uparrow$0.28)\\
      +AWD \citep{awd} &76.30 ($\uparrow$0.29)&75.64 ($\uparrow$0.46)&61.15 ($\uparrow$0.11)&\textbf{42.81 ($\uparrow$1.35)}&41.76 ($\uparrow$0.14)&37.17 ($\uparrow$0.46)&55.81 ($\uparrow$0.47)\\
      \textbf{+LWS (Ours)} &\textbf{76.85 ($\uparrow$0.84)}&\textbf{75.88 ($\uparrow$0.70)}&\textbf{64.08 ($\uparrow$3.04)}&42.42 ($\uparrow$0.96)&\textbf{42.93 ($\uparrow$1.31)}&\textbf{37.70 ($\uparrow$0.99)}&\textbf{56.64 ($\uparrow$1.30)}\\
    \bottomrule
    \end{tabular}
    }
    \vspace{-1em}
    }
  \end{center}
  % \vspace{-1em}
\end{table}

\begin{figure}[t]
        
    \centering

    \begin{minipage}[t]{0.56\linewidth}
        \centering
        \includegraphics[width=\linewidth]{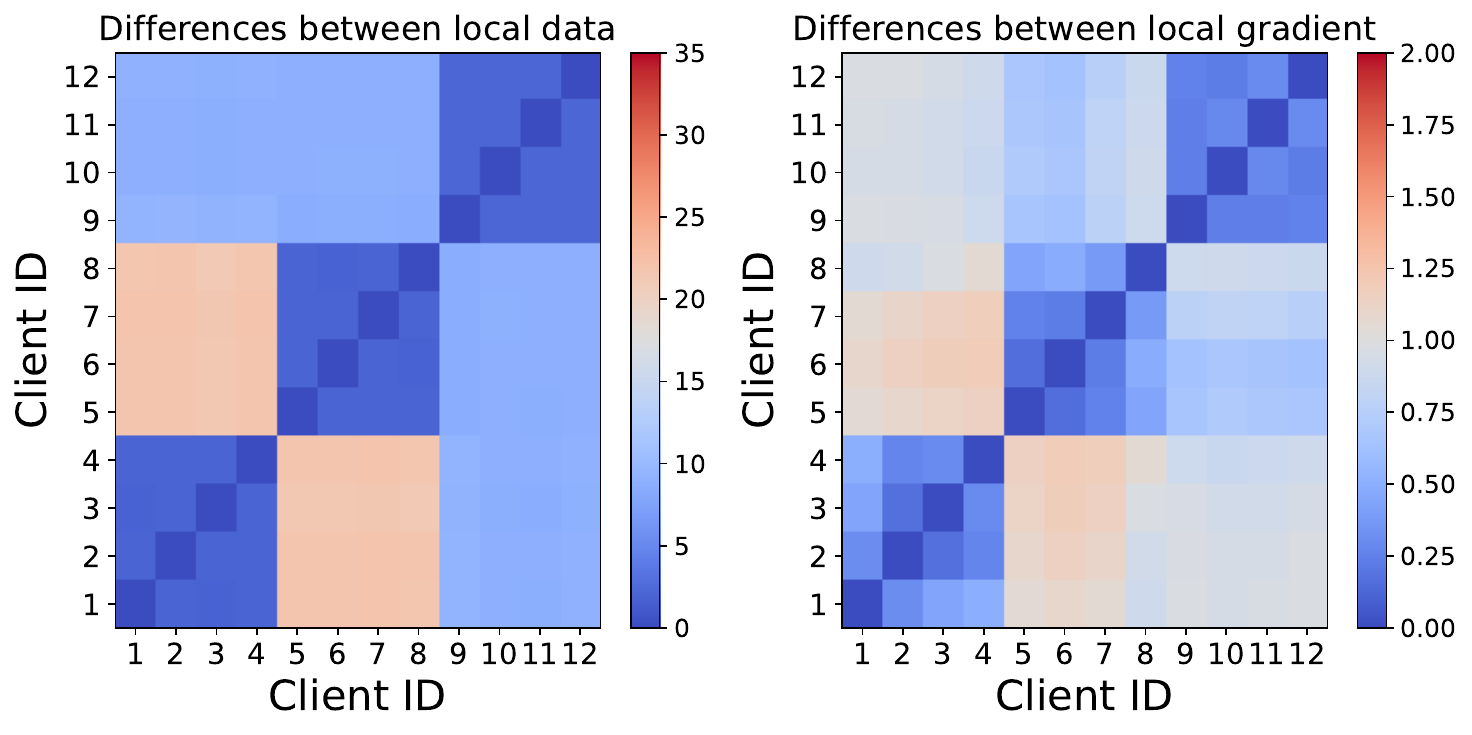}
        \vspace{-2em}
        \caption{Divergence in local data and gradients}
        \label{dis}
    \end{minipage}%
    \hfill
    \begin{minipage}[t]{0.44\linewidth}
        \centering
        \includegraphics[width=\linewidth]{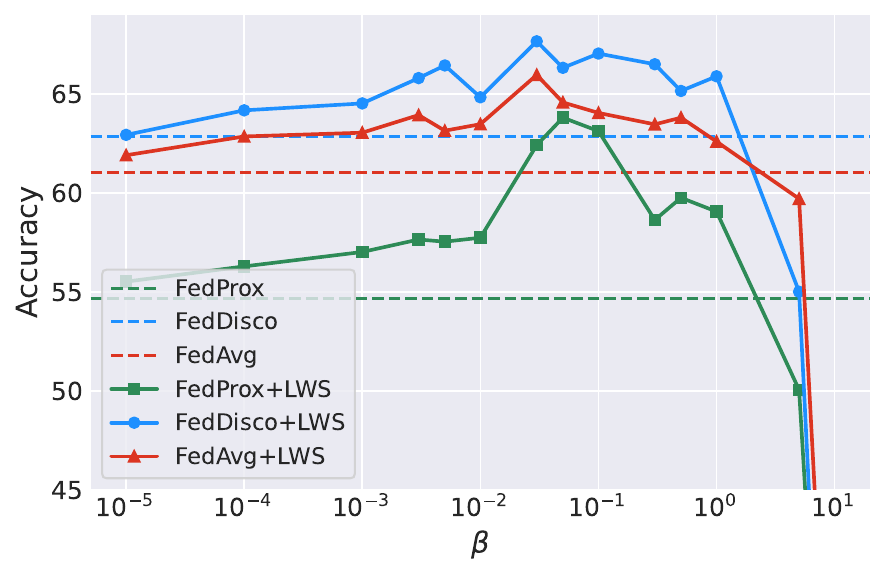}
        \vspace{-2em}
        \caption{\red{Accuracy with different $\beta$.}}
        \label{vary_beta}
    \end{minipage}
    \vspace{-0.5em}
    \setlength{\abovecaptionskip}{0.cm}
    \vspace{0.5em}
\end{figure}

\textbf{Relationship between local dataset and local gradient.}
Our method uses local gradients to calculate the shrinking factor. To analyze the reasons for its effectiveness, we conducted additional experiments for further observation.
In Figure \ref{dis}, we illustrate the divergence between different local datasets and between the correspondinglocal gradients (i.e., local model's parameter changes after local training). As shown, the distances in local gradients closely resemble those of the local data distributions. This demonstrates that the local gradient can effectively capture relevant information about the local data. In other words, our calculation of the local gradient variance, $\tau$, also serves as a measure of the degree of data heterogeneity. For more heterogeneous data, stronger regularization is applied, making the training process more stable. More implementation details can be found in Appendix \ref{exp_detail}.

\textbf{Hyperparameter.}
In this section, we explore the impact of varying the hyperparameter $\beta$ to highlight the flexibility and adaptability of hyperparameter tuning in our proposed FedLWS. 
In Figure \ref{vary_beta}, we demonstrate the impact of different $\beta$ values on accuracy when FedLWS is combined with various methods. \red{As $\beta$ approaches 0, the $\gamma$ value calculated using Equation \ref{gamma} converges to 1, resulting in the model's performance degrading to that of the baseline. Conversely, when $\beta$ is too large, the calculated $\gamma$ becomes excessively small, which can cause model instability or even failure. Based on our experiments, we recommend a safe range for $\beta$ between 0.001 and 0.1.}

\vspace{-1em}
\section{Conclusion}\label{conclusion}

In this paper, through empirical explorations, we show that layer-wise weight shrinking can further improve the generalization of the global model. Subsequently, we investigate the factors influencing the shrinking factor during the training process of Federated Learning. Based on these observations, we present FedLWS, a method that directly computes layer-wise shrinking factors without requiring any additional data. FedLWS can be seamlessly integrated with the existing Federated Learning methods to further enhance the performance of the global model. Experiments demonstrate that FedLWS steadily enhances the state-of-the-art FL approaches under various settings.
A potential limitation of our method is that it is only applicable to scenarios where the client model architectures are identical. The heterogeneity in client model architecture is a common issue in FL, and this limitation is prevalent in many FL methods. We plan to address the application in scenarios with heterogeneous client models in future work.

\section*{Acknowledgements}
We truly thank the reviewers for their great effort in our submission. Changlong Shi, Jinmeng Li, Dandan Guo and Yi Chang are supported by the National Natural Science Foundation of China (No. U2341229, No. 62306125) and the National Key R\&D Program of China under Grant (No. 2023YFF0905400).

% \newpage
% \clearpage
\bibliography{reference}
\bibliographystyle{iclr2025_conference}

% \appendix
% \section{Appendix}
% You may include other additional sections here.

%%%%%%%%%%%%%%%%%%%%%%%%%%%%%%%%%%%%%%%%%%%%%%%%%%%%%%%%%%%%

\newpage
\appendix

\section{More Discussion}
\label{disws}

\subsection{Modularity.}
Our proposed FedLWS can be easily incorporated with many existing FL methods to further improve their performance. For the FL methods adjusting the client-side model \citep{fedprox,feddyn,moon}, which corresponds to the line 6 in Algorithm \ref{alg 1}, since our proposed FedLWS operates on the server-side with lines 11 and 12 in Algorithm \ref{alg 1}, it can be easily integrated with these methods. Moreover, for the methods that conduct the server-side model adjustment, most previous works \citep{fedavg,feddisco,fedavgm}
commonly normalize the aggregation weights (the sum of weight is 1), which is orthogonal to our method. Hence, our FedLWS can also be incorporated with them by conducting weight shrinking after model aggregation, further enhancing the model's generalization performance. 
For instance, regardless of whether the server utilizes FedAvg \citep{fedavg} or FedDisco \citep{feddisco} for model aggregation (i.e., achieving the global model $\widehat{\textbf{w}}^{t+1}_g$ like line 10 in Algorithm \ref{alg 1}), we can further strengthen the aggregation process by implementing the lines 11 and 12 in Algorithm \ref{alg 1}.

\subsection{Relation with Weight Decay.}\label{relation_wd}
\red{In Table \ref{weight_decay}, AWD and AdaDecay were not originally designed for the federated learning (FL) context. In our work, we adapted these methods to make them applicable to FL settings for a fair comparison with FedLWS. However, these methods do not account for the differences between local models, which are critical in heterogeneous FL scenarios. In contrast, FedLWS explicitly addresses these differences by considering the variance of client update gradients, combined with its layer-wise adaptability, leading to better performance in FL environments.}
Although the layer-wise weight shrinking (LWS) effect is analogous to weight decay, these two methods are distinct. 
1) LWS employs a distinctive sparse regularization frequency, modifying model weights only in each round, resulting in stronger regularization. In LWS, $1-\gamma$ is near 0.05, which is significantly larger than the value of the weight decay (typically $10^{-4}$).
2) LWS shrinks the global model rather than decaying the model by subtracting a decay term. 3) In this paper, LWS is conducted on the server side to adjust the aggregated model. At this stage, the model is not trainable. We utilize parameter variations as pseudo-gradients to compute the shrinking factor. In contrast, weight decay is applied to the local model training process on the client side. Consequently, the two methods are not conflicted in FL.

\subsection{Relation with FedLAW.}

Learning aggregation weights of local models is an effective solution to improve Federated Learning. Recently, FedLAW \citep{fedlaw} has gradually attracted the attention of the researchers, which identifies the global weight shrinking phenomenon and then learns the optimal shrinking factor $\gamma$ and the aggregation weights $\lambda$ on the server. Despite the effectiveness of FedLAW, it needs to optimize the $\gamma$ and $\lambda$ at the server with additional proxy dataset, which is assumed to have the same distribution as the test dataset. We have shown in Table \ref{r2} that only change the class distribution of the proxy dataset can reduce the performance of FedLAW a lot, where proxy dataset and test dataset are still from the same dataset but with different class distributions. Considering data privacy is a significant concern in Federated Learning, obtaining the proxy dataset with a distribution identical to the test dataset in practice is challenging, limiting its application in real-world. The original FedLaw paper also conducted similar experiments on CIFAR-10 with $\alpha$=100 and 0.1, but compared to theirs, our experiments are more comprehensive, covering both CIFAR-10 and CIFAR-100 datasets with $\alpha$ values of 100 and 0.1, the latter indicating a higher degree of heterogeneity. Consequently, when comparing our results in Table \ref{r2} with those in Table 6 of the FedLAW paper, only the scenario with $\alpha$ = 100 is directly comparable. As observed, when $\alpha$ = 100, the performance drop in the original paper is 2.26\% (from 79.40 to 77.14), while in our corresponding scenario, the drop is 4.61\% (from 81.30 to 76.69). The performance after applying long-tailed proxy data is similar across both studies; however, the results in our experiment outperform those in the original paper when using balanced proxy data. This could be attributed to the higher quality of our randomly sampled balanced proxy data. This further highlights that the effectiveness of the FedLAW method is highly dependent on proxy data quality. 
Besides, FedLAW learns the shrinking factor $\gamma$ and the aggregation weights $\lambda$ at the server jointly, making it difficult to combine with other aggregation methods in FL. Last but not least, FedLAW ignores the variations across different layers of model for model aggregation. Therefore, designing an effective and flexible method to solve above-mentioned problems is quite necessary for model aggregation in FL.

To this end, we propose a novel model aggregation strategy, Federated Learning with Layer-wise Weight Shrinking (FedLWS). It is non-trivial since we deduce the expression of the shrinking factor and can calculate it directly through the easily available gradient and parameters of the global model, which avoids demanding proxy dataset and optimization. Considering the distinctions among layers within deep models, we further design the shrinking factor in a layer-wise manner, which is feasible and effective due to the deduced expression. Besides, ours can be easily integrated with most of related model aggregation methods for decoupling shrinking factor and aggregation weights. Therefore, ours is not just applying FedLAW to each layer of the global model. Even we only consider a single shrinking factor $\gamma$ for all layers, ours is not equal to FedLAW. We have provided our degraded model-wise version (shared shrinking factor across all layers) in Table \ref{r1}. It is evident that our model-wise version still performs betters than FedLAW, proving the difference between FedLAW and ours. Furthermore, since we do not need to optimize the shrinking factor on the server, ours has less computational cost than FedLAW; As shown in Table \ref{time} of the main text, the computational time required by FedLAW is nearly 60 times that of our method. These observations and results show that our proposed method can serve as an effective plug-and-play module in many existing FL methods, without demanding proxy dataset and additional computational cost.

To summarize, FedLAW is the state-of-the-art method in the line of the optimal shrinking factor for model aggregation in FL. We propose a new method in this line that improves over FedLAW via nontrivial efforts, which we believe has significant contributions.

\begin{table}[h]
\caption{Performance in scenarios where proxy data are long-tailed and test data are balanced. The numbers in parentheses indicate the decrease compared to using proxy data with the same distribution as the test data.}
\label{r2}
\centering
\resizebox{\linewidth}{!}{
\begin{tabular}{lccccc}
\toprule
\textbf{Dataset}        & \textbf{Cifar10($\alpha=0.1$)} & \textbf{Cifar10($\alpha=100$)} & \textbf{Cifar100($\alpha=0.1$)} & \textbf{Cifar100($\alpha=100$)} & \textbf{Avg} \\ 
\midrule
% {FedLAW (Balance) }&{89.78}&{89.23}&{87.52}&{81.30}&{75.27}&{64.76}&{41.05}&{37.56}&{34.59}&{37.20}&{33.49}&{29.13}&{58.41}
\textbf{FedLAW (LT)}    & 56.91($\downarrow$7.85) & 76.69($\downarrow$4.61) & 27.57($\downarrow$7.02) & 36.58($\downarrow$4.47) & 49.44($\downarrow$5.99) \\ 
\textbf{FedAvg+LWS (Ours)} & \textbf{64.08} & \textbf{76.85} & \textbf{37.70} & \textbf{42.42} & \textbf{55.26} \\ 
\bottomrule
\end{tabular}}

\end{table}

\begin{table}[ht]
\caption{Evaluation of model-wise methods ($\alpha=0.1$).}
    \label{r1}
    \centering
    \resizebox{0.9\linewidth}{!}{
    \begin{tabular}{lccccccc}
        \toprule
        Dataset & Fmnist & Cifar100 & Cifar10 & TinyImageNet & \textbf{Avg} \\
                % & $(\alpha=0.1)$ & $(\alpha=0.1)$ & $(\alpha=0.1)$ & $(\alpha=0.1)$ & \\
        \midrule
        FedLAW \citep{fedlaw} & 87.52 & 34.59 & 64.76 & 29.13 & 54.00 \\
        Ours (Model-wise) & 87.95 & 36.41 & 65.31 & 30.39 & 55.02 \\
        Ours (Layer-wise) & \textbf{89.39} & \textbf{36.93} & \textbf{66.91} & \textbf{31.53} & \textbf{56.19} \\
        \bottomrule
    \end{tabular} 
    }
\end{table}

\section{Extra Experimental Results}
\label{app_res}

% the improvement effect of our FedLWS on the model is limited.

\subsection{Experiments on Pretrained Model.}\label{pretrain}

\begin{wraptable}{r}{0.5\linewidth}
\vspace{-2em}
  \centering
  \caption{
    The performance of FedLWS using a pretrained ResNet20 model for initialization.
  }
  \vspace{-0.2cm}
  \label{pretrain_exp}
  \renewcommand{\arraystretch}{1}
  \makebox[\linewidth][c]{
  \resizebox*{\linewidth}{!}{
  \renewcommand{\arraystretch}{1.5}
  \begin{tabular}{l c c c c c c c c c c c c c c c c c c c c c }
    \\
    \toprule
       \multicolumn{1}{l}{\textbf{Dataset}}&\multicolumn{2}{c}{CIFAR-10}&\multicolumn{2}{c}{CIFAR-100}\\
       \cmidrule(lr){2-3} \cmidrule(lr){4-5}
       \multicolumn{1}{l}{\textbf{Heterogeneity}}& \multicolumn{1}{c}{$\alpha=0.1$} & \multicolumn{1}{c}{$\alpha=1$}& \multicolumn{1}{c}{$\alpha=0.1$}& \multicolumn{1}{c}{$\alpha=1$}\\
      \midrule
        FedAvg&86.74&91.30&63.98&66.77\\
        \textbf{+LWS (Ours)}&\textbf{87.02}&\textbf{91.47}&\textbf{64.14}&\textbf{67.06}\\
    \bottomrule
  \end{tabular}
  }
  }
\end{wraptable}
In this section, we conducted experiments using a pretrained ResNet20 on various datasets and degrees of data heterogeneity. The experimental results as shown in Table \ref{pretrain_exp}. It can be seen that when using a pretrained model for initialization, FedLWS can still further enhance the performance of the global model. This demonstrates the effectiveness of our method. When the global model is initialized using a pretrained model, the shrinking factor of FedLWS is close to 1 (around 0.99) during the training process. 
This can be attributed to the superior performance of the pretrained model, which requires only minimal adjustments during the training process. As a result, there are smaller gradient updates and less necessity for strong regularization. Therefore, when the global model is initialized using a pretrained model, our method exhibits limited influence on improving the global model. This finding is consistent with the insights presented in our paper.

\begin{figure}[t]
        
        \centering
        
	\subfigure[ResNet20, CIFAR-10. ]{
		\begin{minipage}[t]{0.5\linewidth}%%%%%%%%%note2
			\includegraphics[width=\linewidth]{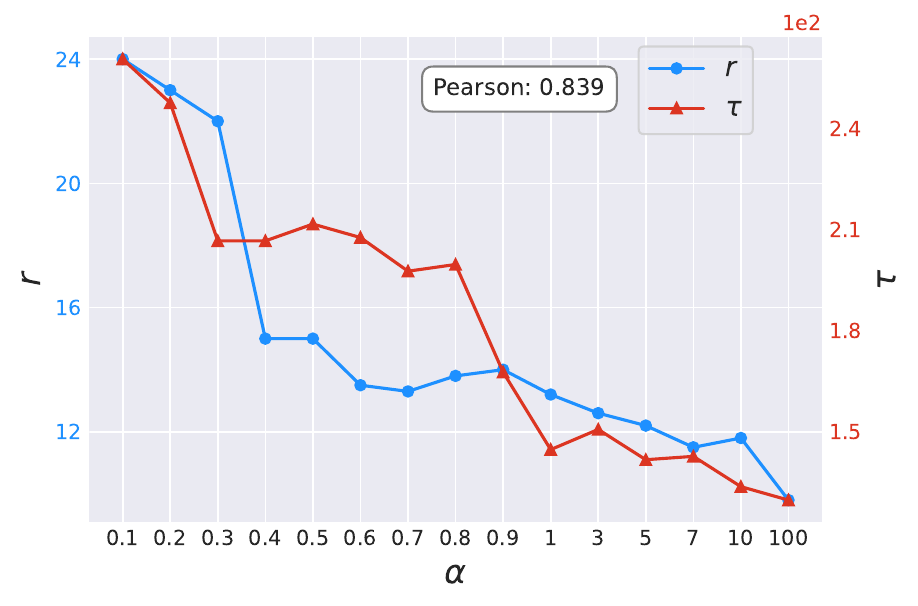}%%%%%%%%%note3
		\end{minipage}%
	}%
        \subfigure[ResNet20, CIFAR-100.]{
		\begin{minipage}[t]{0.5\linewidth}
				\includegraphics[width=\linewidth]{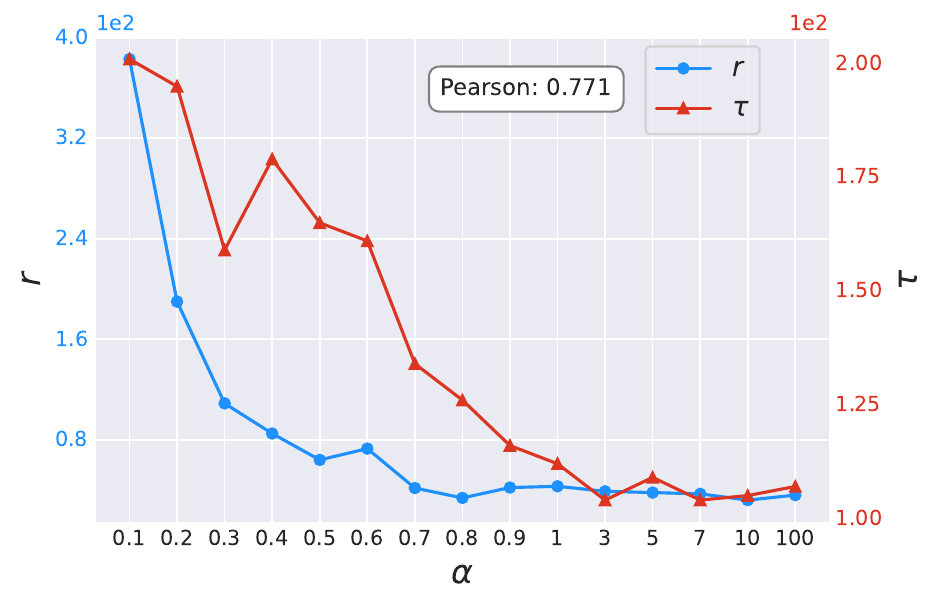}
			% \caption{FedAvg, $\alpha$=0.5.}
		\end{minipage}%
	}%

    \subfigure[ResNet18, Tiny-ImageNet.]{
		\begin{minipage}[t]{0.5\linewidth}
				\includegraphics[width=\linewidth]{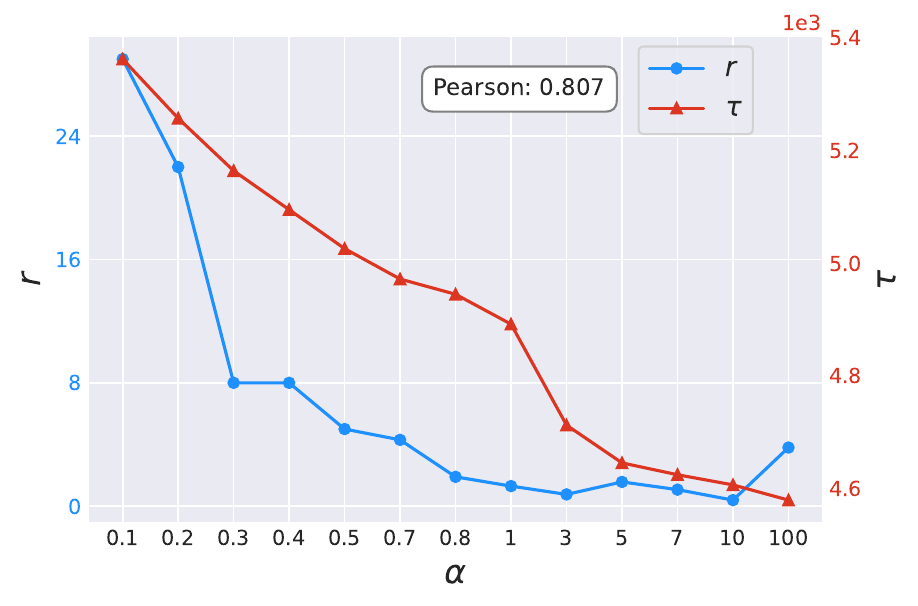}
			% \caption{FedAvg, $\alpha$=0.5.}
		\end{minipage}%
	}%
	\subfigure[ViT, CIFAR10.]{
		\begin{minipage}[t]{0.5\linewidth}
				\includegraphics[width=\linewidth]{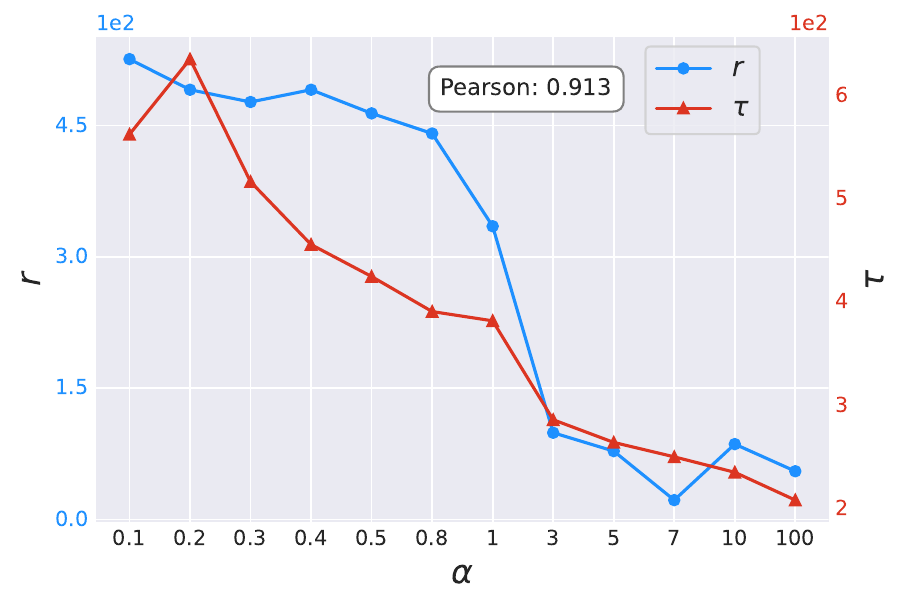}
			% \caption{FedAvg, $\alpha$=0.5.}
		\end{minipage}%
	}%
    
 %        \subfigure[CIFAR-100, $\alpha$=0.5, ResNet20.\label{3}]{
	% 	\begin{minipage}[t]{0.25\linewidth}
	% 			\includegraphics[width=\linewidth]{fig/diff_node_num}
	% 		% \caption{FedAvg, $\alpha$=0.5.}
	% 	\end{minipage}%
	% }%
        \setlength{\abovecaptionskip}{0.cm}
        % \vspace{-0.5em}
	\caption{\red{Gradient variance $\tau$ and ratio $r$ under different degrees of data heterogeneity.}}
    % \vspace{-1em}
        \label{app_ratio_tau}
	\centering
    
        % \label{FLparameter}
% \vspace{-2em}
\end{figure}

\red{\subsection{Relation between the gradient variance and the balance ratio}\label{appsec_ratio_tau}}

To further validate our hypothesis, in addition to the experiments shown in Figure \ref{ratio_tau}, we conducted additional experiments on different datasets and model architectures. The results, as shown in Figure \ref{app_ratio_tau}, indicate a positive correlation between gradient variance $\tau$ and the balance ratio $r$ across different model architectures and datasets, further supporting our hypothesis. It can be observed that both $r$ and $\tau$ exhibit a corresponding reduction as the degree of data heterogeneity diminishes. It is also consistent with our view that gradient variance is relative with the balance ration $r$ between regularization term and optimization term. Therefore, it inspires us to assume the relationship between the unknown balance ration $r$ with the easily available gradient variance $\tau$.

Moreover, there is a sudden decline of $r$ between 0.6 and 0.7 in Figure \ref{ratio_tau}, which was not observed in other datasets or model architectures.
This suggests that the behavior observed in Figure \ref{ratio_tau} is likely specific to the characteristics of the dataset or the model architecture used in that particular experiment. Additionally, considering that deep learning models can exhibit variability due to stochastic elements (e.g., initialization, sampling, or optimization), this decline might also be attributed to a random fluctuation specific to this experiment.

\subsection{Experiment detail.\label{exp_detail}}
In the experiment of Figure \ref{dis}, we set up 12 clients, and the dataset we used was CIFAR-10. To observe the data differences more intuitively, we introduced an extreme scenario of data heterogeneity: The local data of clients 1 to 4 contained only the first 5 classes of the CIFAR-10, clients 5 to 8 had only the left 5 classes, and clients 9 to 12 had local datasets that included all classes. The size of the local dataset for each client was the same. We calculated the distances between the clients' local dataset via optimal transport, a methodology that has garnered widespread application within the realm of machine learning \citep{ot,jintong,hangting}, and the results are displayed in the left of Figure \ref{dis}.
Then, we conducted Federated Learning process, during which each client obtained its own local gradient after local training. We then compared the cosine distance between the local gradients, with the results displayed on the right of Figure \ref{dis}. It can be seen that the relationships between the client vectors closely resemble those of the local data distributions. For example, client 1’s client vector exhibits minimal differences with clients 2-4 due to their similar local data distributions. However, the differences between client 1 and clients 5-8 are much larger because of their highly divergent data distributions: client 1’s local data contains only the first 5 classes, while clients 5-8 have only the last 5 classes.  The differences between client 1 and clients 9-12 are smaller than those with clients 5-8, as clients 9-12 include data from all classes, making their distribution relatively closer to client 1. This demonstrates that the local gradient can effectively capture relevant information about the local data.

\subsection{Only adjust a single term.}
As can be seen from the Equation \ref{regularization}, $\gamma$ influences both the optimization term and the regularization term. Our method calculates an appropriate value of $\gamma$ to balance the regularization and optimization terms during the training process. In other words, our method simultaneously considers both aspects to determine the optimal value of $\gamma$. 
To investigate the impact of our method on a single item, we conducted experiments as shown in Table \ref{r9}, comparing two different variants: FedLWS/only-opt and FedLWS/only-reg.  FedLWS/only-opt indicates adjusting only the optimization term during training, and FedLWS/only-reg indicates adjusting only the regularization term. The results indicate that adjusting only one of the terms does not perform well. This proves that our method effectively improves the model's performance by balancing the regularization and optimization terms during the training process.\\

\begin{table}[t]
  \centering
  \caption{Performance comparison when only adjust learning rate or weight decay.}
  \label{r9}
  \vspace{1em}
  \resizebox{\linewidth}{!}{
  \renewcommand{\arraystretch}{1}
  \begin{tabular}{c c c c c c}
    \toprule
    \textbf{Dataset} & \textbf{Cifar10($\alpha=0.1$)} & \textbf{Cifar10($\alpha=100$)} & \textbf{Cifar100($\alpha=0.1$)} & \textbf{Cifar100($\alpha=100$)} & \textbf{Avg} \\
    \midrule
    \textbf{FedLWS/only-reg} & 61.54 & 74.92 & 34.39 & 38.89 & 52.44 \\
    \textbf{FedLWS/only-opt} & 62.28 & 74.72 & 33.74 & 39.21 & 52.49 \\
    \textbf{FedLWS} & \textbf{64.08} & \textbf{76.85} & \textbf{37.70} & \textbf{42.42} & \textbf{55.26} \\
    \bottomrule
  \end{tabular}
  }
  \vspace{-1em}
\end{table}

\subsection{More Experiment of Layer-wise Weight Shrinking}

\subsubsection{Model parameters}
In Figure \ref{paracnn}, we present the histogram of the final models’ parameters when using CNN as the model architecture. Similar phenomena to those observed in Figure \ref{para} of the main text can be observed. Layer-wise FedLWS drives a larger number of model parameters towards zero, thereby achieving an effect akin to weight decay and enhancing the model's generalization ability.

\begin{figure}[ht]
% \vskip 0.2in

\begin{center}
\centerline{\includegraphics[width=
0.7\linewidth]{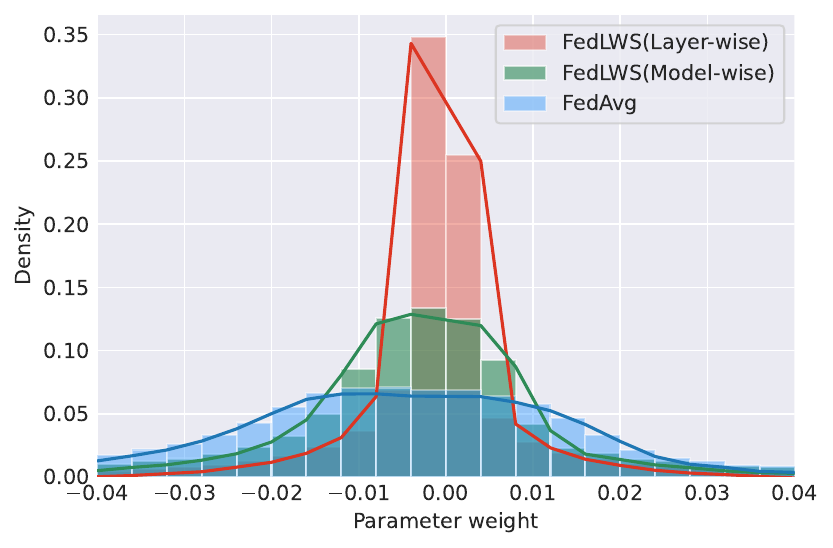}}
% \vspace{-0.3cm}
\caption{The histogram of final models’ parameters. (CNN)}%alpha=0.1 cifar10
\label{paracnn}
\end{center}
% \vskip -0.2in
% \
% \vspace{-5em}
\end{figure}

\begin{figure}[t]
    \centering
    \subfigure[ResNet20.]{
        \includegraphics[width=0.75\linewidth]{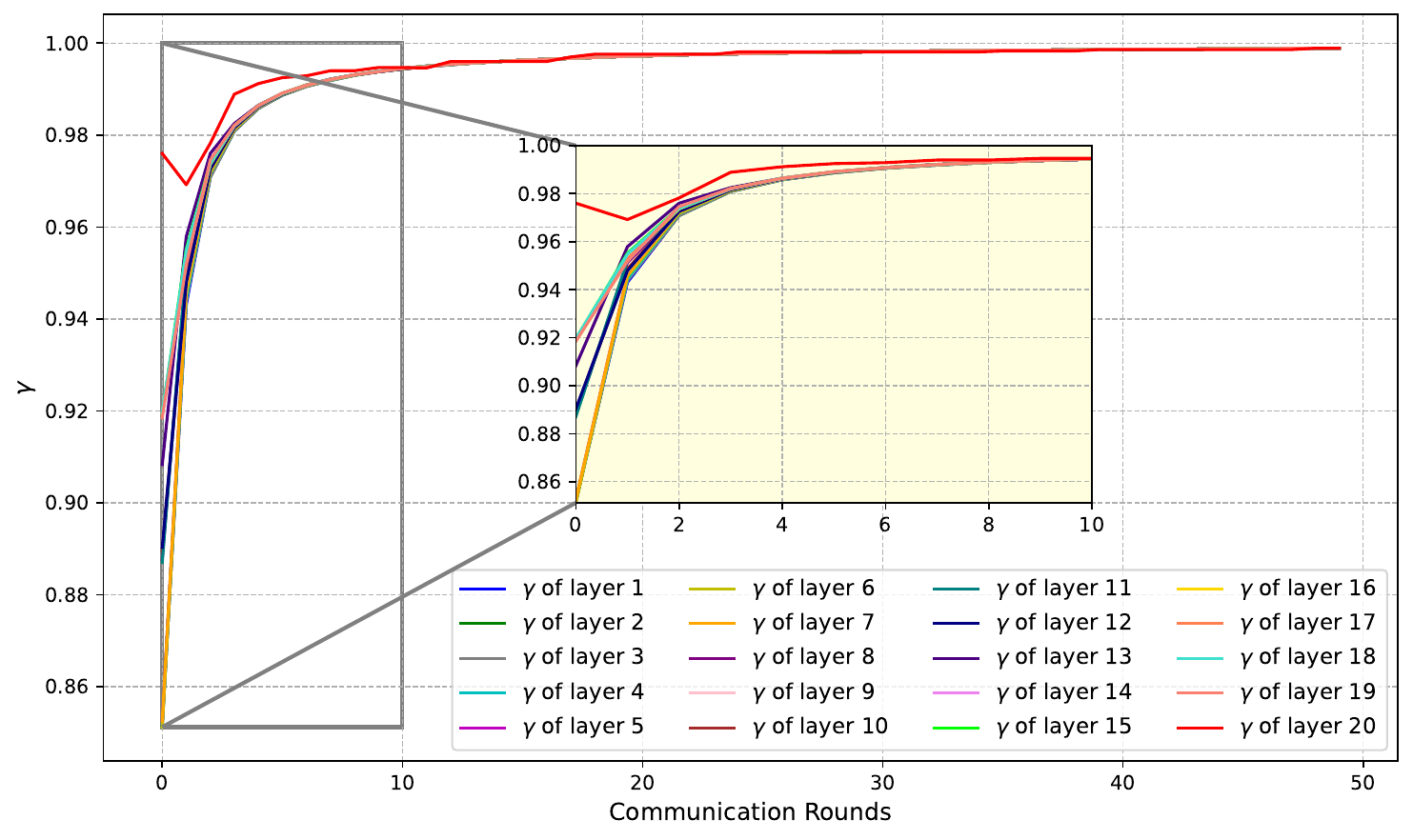}
    }%
    
    \subfigure[\red{CNN.}]{
        \includegraphics[width=0.75\linewidth]{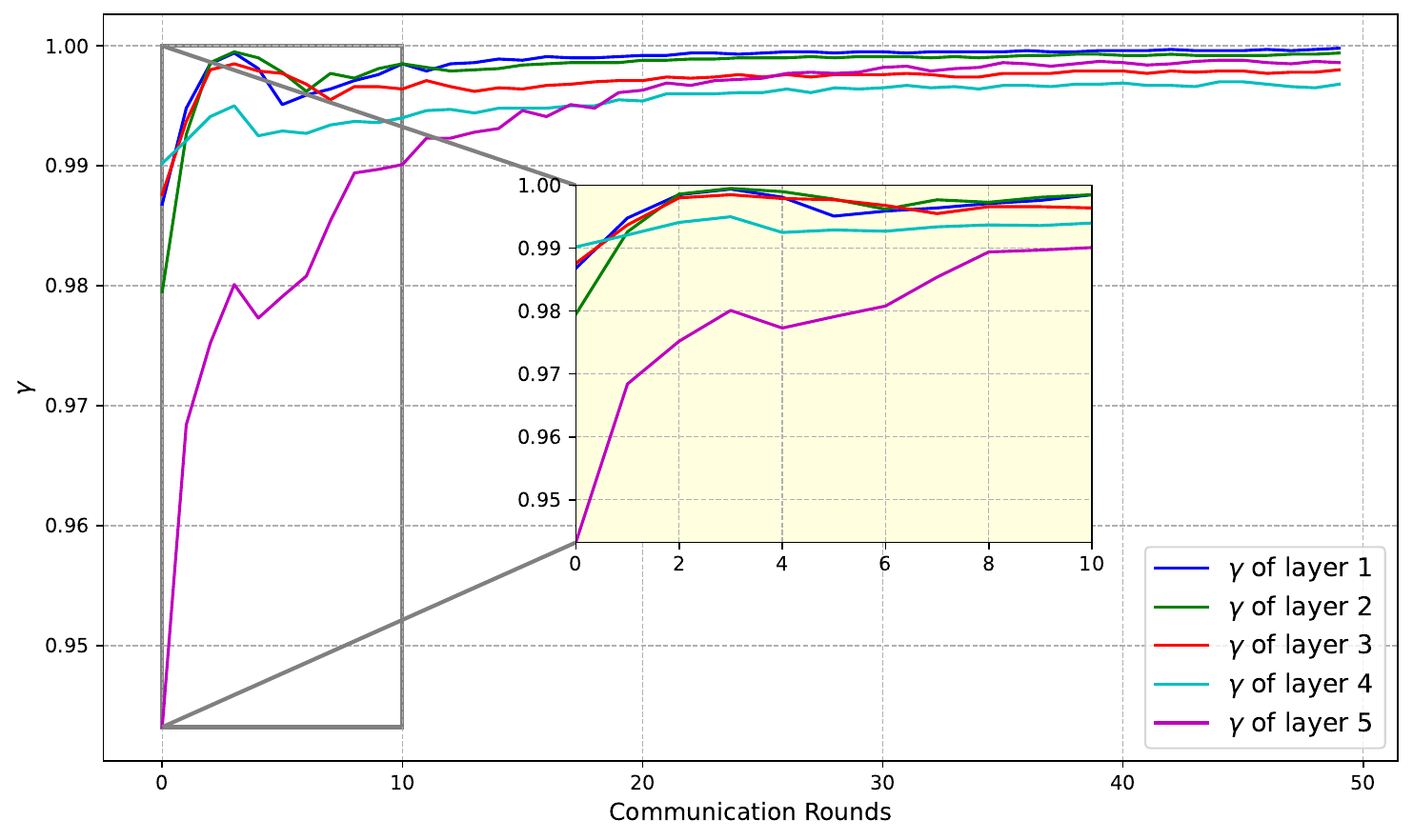}
    }%
    
    % \hspace{0.5cm} % 增加间距调整两个图之间的距离
    \subfigure[\red{ViT.}]{
        \includegraphics[width=0.75\linewidth]{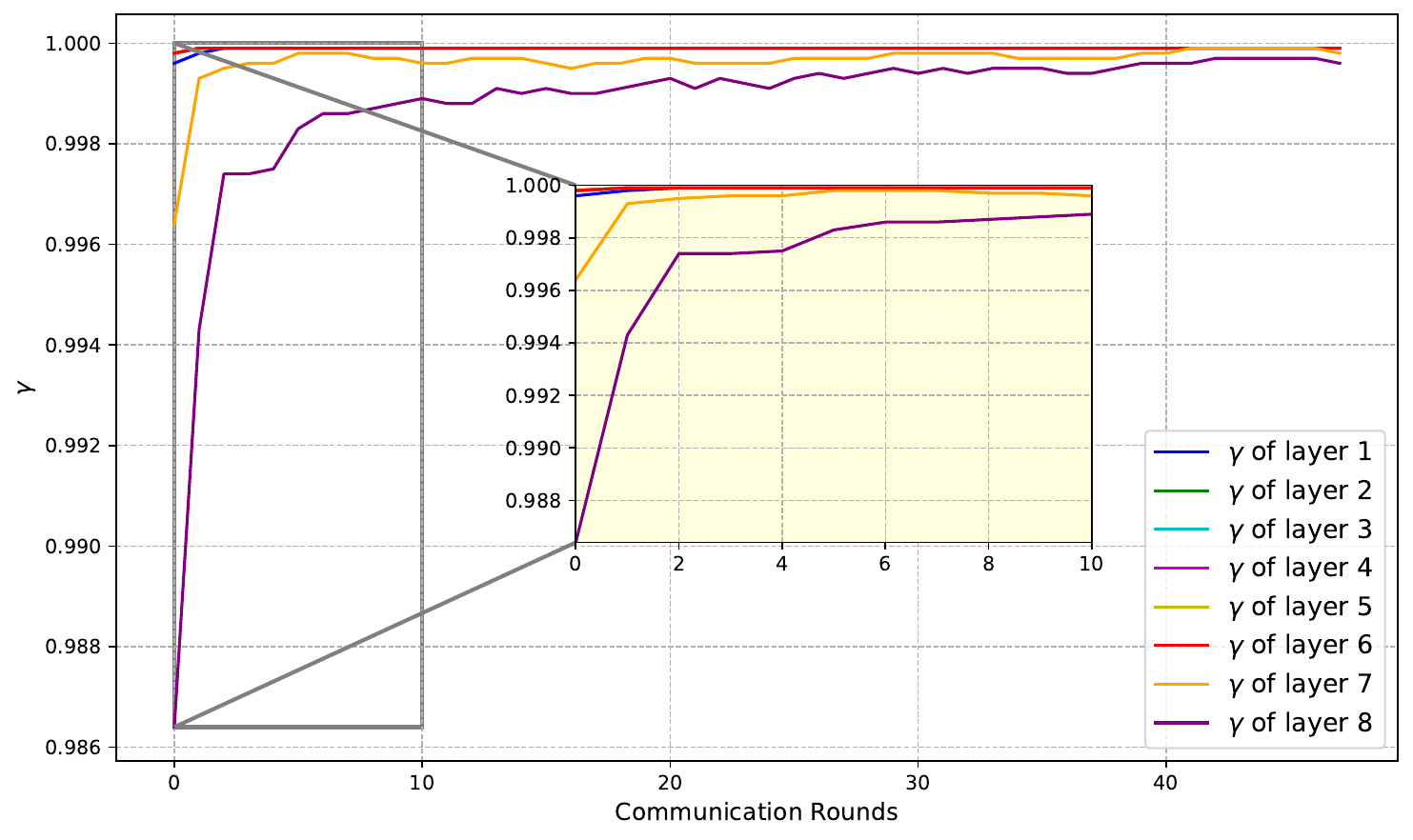}
    }%
    
    \setlength{\abovecaptionskip}{0.cm}
    \caption{\red{The value of each layer's shrinking factor for different model architectures during the training process, with the dataset being CIFAR-10.}}
    \label{all_layer}
\end{figure}

\subsubsection{Shrinking factor for each layer.}

In Figure \ref{all_layer}, we illustrate the layer-wise shrinking factors calculated by our method for each layer during the training process \red{across various model architectures.} It can be observed that during the initial stages of training, there is a significant difference among the layers of the model, with the classifier exhibiting a more pronounced distinction compared to other layers. Previous works \citep{ccvr,fedetf}) have demonstrated the specificity of the classifier in Federated Learning. Our observations in Figure \ref{all_layer} further validate this point. As the training progresses, the differences in the shrinking factors between layers gradually diminish. This indicates that our method primarily adjusts the aggregation process of the model during the early stages of training. As the training converges, the differences between clients gradually diminish, resulting in the calculated shrinking factors approaching 1.

% \vspace{3em}
\subsubsection{Compare with fixed Layer-Wise Shrinking Factor.} In Table \ref{with fix}, taking the FedAvg and FedDisco as the baselines, we compare the performance of our FedLWS and the method with fixed layer-wise $\gamma$ (the method in Figure \ref{layerfix}) under different datasets and degrees of heterogeneity. 
Fixed layer-wise $\gamma$ sets a fixed shrinking factor for each layer, and its value remains constant throughout the training process. Given that the CNN model comprises five layers, we set the $\gamma$ values between each consecutive layer to differ by 0.01. Therefore, in fixed layer-wise $\gamma$, the $\gamma$ of each layer is uniformly decreasing from 1 to 0.96. From Table \ref{with fix}, it can be observed that the performance of fixed layer-wise $\gamma$ shows a significant improvement in some scenarios, even surpassing FedLWS, indicating that considering weight shrinking in a layer-wise manner can indeed enhance the model's performance. However, its performance is not stable, and a significant deterioration is observed in certain situations (CIFAR-100, NIID($\alpha$=0.1)). Therefore, compared to fixed layer-wise $\gamma$, our approach is more robust. By dynamically adjusting the layer-wise shrinking factors in different scenarios, our method achieves better overall performance.

%%%%%%%%%%%%%%%%%%%%%%%%%%%%%%%%%%%%%%%%%%%%%%%%%%%%%%%%%%%%%%%%%%%%%%%%%%%%%%%
%%%%%%%%%%%%%%%%%%%%%%%%%%%%%%%%%%%%%%%%%%%%%%%%%%%%%%%%%%%%%%%%%%%%%%%%%%%%%%%

\begin{table*}[h!]
  \begin{center}
    \caption{
    Compare the performance of FedLWS and fixed layer-wise shrinking factor in both IID($\alpha$=100) and NIID($\alpha$=0.1) settings on CIFAR-10 and CIFAR-100.
    }
    \label{with fix}
    \makebox[\linewidth][c]{
    \resizebox*{\linewidth }{!}{
    \renewcommand{\arraystretch}{1.5}
    \begin{tabular}{l c c c c c }
    \\
    \hline
       \textbf{Dataset}&\multicolumn{2}{c}{CIFAR-10}&\multicolumn{2}{c}{CIFAR-100}\\
       % \cmidrule(lr){2-4} \cmidrule(lr){5-10}\cmidrule(lr){11-16}
       
       % Dataset&\multicolumn{2}{c}{Cifar-10}&\multicolumn{2}{c}{Cifar-100}\\
      % \Xcline{2-3}{0.4pt} \Xcline{4-5}{0.4pt}
      \hline
       \textbf{Heterogeneity}& \multicolumn{1}{c}{IID($\alpha$=100)} & \multicolumn{1}{c}{NIID($\alpha$=0.1)}& \multicolumn{1}{c}{IID($\alpha$=100)} & \multicolumn{1}{c}{NIID($\alpha$=0.1)}& \multirow{2}{*}{Mean Improvement} \\
       % \multirow{3}{|c}{Remove}
      % \cmidrule(lr){5-6}\cmidrule(lr){7-8}\cmidrule(lr){9-10}\cmidrule(lr){11-12}\cmidrule(lr){13-14}\cmidrule(lr){15-16}
      \Xcline{2-2}{0.4pt}\Xcline{3-3}{0.4pt}\Xcline{4-4}{0.4pt}\Xcline{5-5}{0.4pt}
       \textbf{Model} &CNN&CNN&CNN&CNN&\\
          % \Xcline{1-1}{0.4pt}
        % \Xhline{1pt}
      \hline

         FedAvg&70.19&59.66&32.25&30.18&\textbf{---------}\\
     
        FedAvg+Fixed layer-wise $\gamma$&\textbf{73.86($\uparrow$ 3.67)}&61.47($\uparrow$ 1.81)&\textbf{36.47($\uparrow$ 4.22)}&26.14($\downarrow$ 4.04)&1.415\\
          
        FedAvg+FedLWS&71.79($\uparrow$1.60)&\textbf{62.73($\uparrow$3.07)}&33.24($\uparrow$0.99)&\textbf{32.51($\uparrow$2.33)}&\textbf{1.998}\\

      \hline
        FedDisco&70.66&61.78&32.61&30.28&\textbf{---------}\\
        FedDisco+Fixed layer-wise $\gamma$&\textbf{73.83($\uparrow$ 3.17)}&62.37($\uparrow$ 0.59)&\textbf{36.48($\uparrow$ 3.87)}&25.73($\downarrow$ 4.55)&0.770\\

        FedDisco+FedLWS&71.36($\uparrow$0.70)&\textbf{64.79($\uparrow$3.01)}&32.97($\uparrow$0.61)&\textbf{32.46($\uparrow$2.18)}&\textbf{1.625}\\
        
    \bottomrule
    \end{tabular}
    }
    }
    % \vspace{-0.5cm}
  \end{center}
\end{table*}

\red{\subsection{Different layer type.}}

In this section, we investigate the differences in shrinking factors calculated using our method across different types of layers. To ensure a fair and accurate comparison, we employed the MLP, ResNet20, and ViT architectures for federated learning on the CIFAR-10 dataset. In the Table below, we presented the average shrinking factors for each layer type, where $\bar \gamma$ is the mean of layer-wise shrinking factors, e.g., MLP has 3 layers, corresponding to 3 layer-wise shrinking factors $\gamma_1$,$\gamma_1$,$\gamma_3$,and $\bar \gamma=\frac{(\gamma_1+\gamma_2+\gamma_3)}{3}$, ViT(Att.) represents the attention layers in ViT, and ViT(MLP) represents the MLP in ViT.
% \vspace{-1em}
\begin{table}[ht]
    \centering
    \caption{\red{Average shrinking factor for different types of layers.}}
    \begin{tabular}{lcccc}
        \toprule
        & MLP & ResNet20 (Conv.) & ViT (Att.) & ViT (MLP) \\
        \midrule
        $\bar{\gamma}$ & $0.980 \pm 0.011$ & $0.920 \pm 0.026$ & $0.995 \pm 0.003$ & $0.950 \pm 0.037$ \\
        \bottomrule
    \end{tabular}
\end{table}

It can be observed that the $\gamma$ values obtained for different types of layers vary significantly. This also demonstrates that our method can calculate the corresponding shrinking factors for different layer types. Notably, the shrinking factors for ViT(Att.) layers are closer to 1 and exhibit smaller differences across layers (low variance), indicating that weaker regularization is required. This can be attributed to the extensive parameter size of these layers, which minimizes the impact of gradient changes. Furthermore, in both MLP and ViT(MLP), the shrinking factor of the last layer is smaller than that of the other layers, e.g., the shrinking factors for the three MLP layers are 0.988, 0.984, and 0.968 respectively. This trend can be attributed to the fact that the gradient changes in the last layer of MLP are greater than those in the preceding layers, akin to the phenomenon of gradient vanishing. Consequently, the last layer requires stronger regularization (smaller shrinking factor). The experiments indicate that our method calculates the corresponding shrinking factor for different layer types, allowing for a more refined adjustment of the model aggregation process.

\red{\subsection{Training process}}

{To evaluate the convergence of the proposed method, we examined the variation in test accuracy over rounds across multiple datasets and experimental configurations. Figure \ref{converage} illustrates the accuracy curves under various settings, including different datasets, heterogeneity degree $\alpha$, model architectures, client numbers, selection ratios, and local epochs E.
As shown in Figure \ref{converage}, our method has a similar convergence speed to FedAvg, the accuracy consistently increases with the number of rounds across all datasets, eventually reaching a stable plateau. This trend demonstrates the robustness and convergence of the proposed method under these configurations. In most cases, the accuracy exhibits rapid improvement during the initial stages of training, followed by a gradual stabilization as the model approaches convergence.
Furthermore, the results reveal several insights into the impact of different configurations on convergence speed and final performance:\\
\textbf{Impact of $\alpha$:} Larger values of $\alpha$ result in smoother optimization processes, whereas smaller values may lead to more oscillations in accuracy during training. However, the method ultimately converges in both cases. \\
\textbf{Selection Ratios:} A smaller selection ratio results in slower improvement and more oscillations in training accuracy. Nonetheless, the overall trend remains stable, indicating the method's adaptability to varying levels of client participation.\\
\textbf{Local Training Epochs:} Increasing the number of local training epochs significantly accelerates global convergence, highlighting the importance of local updates in enhancing global optimization efficiency.\\
These observations collectively demonstrate the convergence properties of the proposed method under diverse experimental settings. The consistent upward trend in training accuracy and eventual stabilization across all scenarios confirm the effectiveness and robustness of the method in federated learning.}

\begin{figure}[ht]
% \vskip 0.2in

\begin{center}
\centerline{\includegraphics[width=
\linewidth]{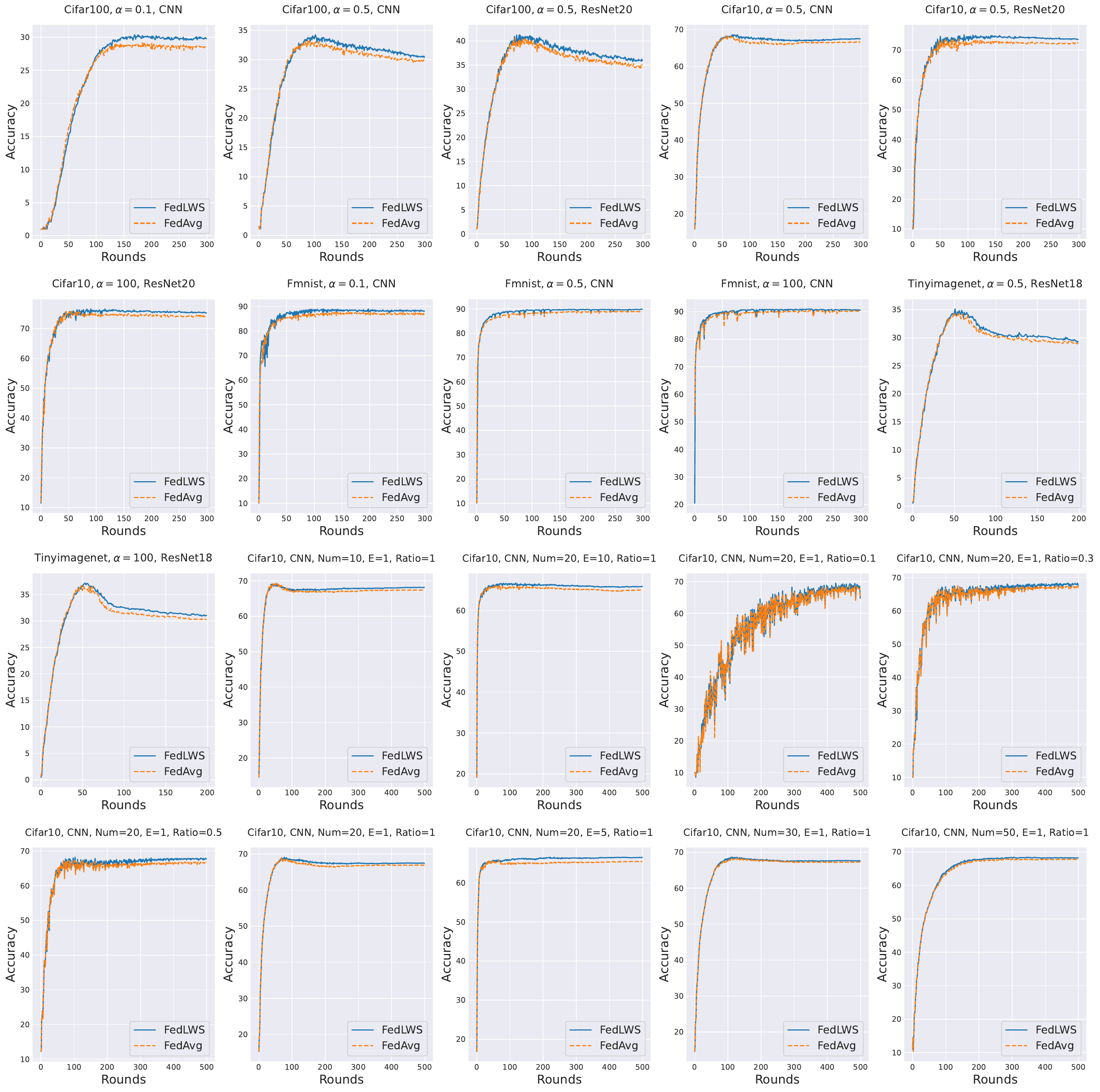}}
% \vspace{-0.3cm}
\caption{\red{Training Accuracy Across Various Datasets and Experimental Configurations (client number, local epoch E, and select ratio).}}%alpha=0.1 cifar10
\label{converage}
\end{center}
% \vskip -0.2in
% \
% \vspace{-5em}
\end{figure}

\section{Implementation Details}
\label{app_expdetail}
In this section, we provide details of the pseudo-code, experiments details, environment, datasets, model architectures, and hyperparameters of the experiment. 
% The source code for implementing FedLWS will be released after the paper is published.
We include the source code for implementing FedLWS in the Supplementary Material.

\subsection{Pseudo-Code}\label{pseudo}

To facilitate a clearer understanding of our methodology, we present the pseudo-code of FedLWS in Algorithm \ref{alg 1}, thereby providing a more intuitive representation of our method.

\begin{algorithm}[t]
   \caption{FedLWS: Federated Learning with Adaptive Layer-wise Weight Shrinking}
   \label{alg 1}
\begin{algorithmic}[1]
   \STATE {\bfseries Input:} Communication round $T$, local epoch $E$, local datasets \{$\mathcal{D}_1,...,\mathcal{D}_K $\}, initial global model $\textbf{w}^0_{g}=\textbf{w}^{0}_{gL}\circ ... \circ\textbf{w}^{0}_{g2} \circ \textbf{w}^{0}_{g1}$, where $\textbf{w}^{0}_{gl}$ is the $l$-th layer of $\textbf{w}^0_{g}$, \red{combine with FL method $M$};
   \STATE {\bfseries Output:} Final global model $\textbf{w}^{T}_{g}$;
   \FOR{$t=0$ {\bfseries to} $T$}
   \STATE Server sends global model $\textbf{w}^t_{g}$ to each client;\\
   \renewcommand{\algorithmiccomment}[1]{\# #1}
   \COMMENT{Clients execute:}
   \FOR{each client $k \in [K]$}

   \STATE \textbf{If} ${M}=\text{FedAvg}$ \textbf{then}: $\textbf{w}^{t}_{k}\gets$ ClientUpdate$(\textbf{w}^{t}_g, \mathcal{D}_k,E)$;
   \red{\STATE \textbf{If} ${M}=\text{FedProx}$ \textbf{then}: $\textbf{w}^{t}_{k}\gets$ ClientUpdatewith$(\textbf{w}^{t}_g, \mathcal{D}_k,E)$;\COMMENT{Use the Loss of FedProx}}
   \STATE Send $\textbf{w}^{t}_k$ to server;
   \ENDFOR \\
   \COMMENT{Server executes:}
   \STATE Server aggregates the received model to generate the global model:\\
   \STATE$\quad \quad \widehat{\textbf{w}}^{t+1}_g=\sum ^K _{k=1} \lambda_k\textbf{w}^{t}_k$ , $\lambda_k=\frac{\mathcal{D}_k}{\sum_{i=1}^K\mathcal{D}_i}$; \red{\COMMENT{If $M$ adjusts aggregation weights, $\lambda_k$ is calculated using $M$.}}

   \begin{tcolorbox}[colback=gray!10, colframe=gray!50, title=Additional Step of our FedLWS, coltitle=blue]
   \STATE Computes the value of $\mathbf{g}_{kl}^t=\mathbf{w}_{kl}^t-\mathbf{w}_{gl}^t$, and $\tau^t_l=\frac{1}{K}{\sum_{k=1}^{K}\|\mathbf{g}_{kl}^t-\mathbf{g}_{meanl}^t\|}$;
   \STATE Computes the value of $\gamma^t_l = \frac{\|\mathbf{w}_{gl}^t\|}{\beta \tau^t_l \|\eta_g^t \mathbf{g}_{gl}^t\| + \|\mathbf{w}_{gl}^t\|}$;
   \STATE Server conducts layer-wise weight shrinking to obtain the updated global model $\textbf{w}^{t+1}_g=\gamma^t_L(\widehat{\textbf{w}}^{t+1}_{gL})\circ...\circ\gamma^t_2(\widehat{\textbf{w}}^{t+1}_{g2})\circ \gamma^t_1(\widehat{\textbf{w}}^{t+1}_{g1})$;
   
   \end{tcolorbox}

   \ENDFOR
\end{algorithmic}
\end{algorithm}

\subsection{Environment}
We conduct experiments under Python 3.7.16 and Pytorch 1.13.1 \citep{pytorch}. 
\subsection{Datasets}

In the experiment, we utilized four image classification datasets: CIFAR-10 \citep{cifar}, CIFAR-100 \citep{cifar}, FashionMNIST \citep{Fashion-mnist}, and Tiny-ImageNet \citep{tinyimagenet}, which have been widely employed in prior Federated Learning methods \citep{fedlaw, feddisco, ccvr}. All these datasets are readily available for download online.
To generate a non-IID data partition among clients, we employed Dirichlet distribution sampling $Dir_{\alpha}$ in the training set of each dataset, the smaller the value of $\alpha$, the greater the non-IID. In our implementation, apart from clients having different class distributions, clients also have different dataset sizes, which we believe reflects a more realistic partition in practical scenarios. We set $\alpha=$0.1, 0.5, and 100, respectively.
When $\alpha$ is set to 100, we consider the data to be distributed in an IID manner.
The data distribution across categories and clients is illustrated in Figure \ref{datadis}. Due to the large number of categories, we did not display the data distribution of CIFAR-100 and Tiny-ImageNet. Their distributions are similar to the other two datasets.

\begin{figure*}[htb]
        
        \centering 
        
	\subfigure[FashionMNIST, $\alpha=0.1$.]{
        \hspace{-2em}		
        \begin{minipage}[t]{0.35\linewidth}%%%%%%%%%note2
			\includegraphics[width=\linewidth]{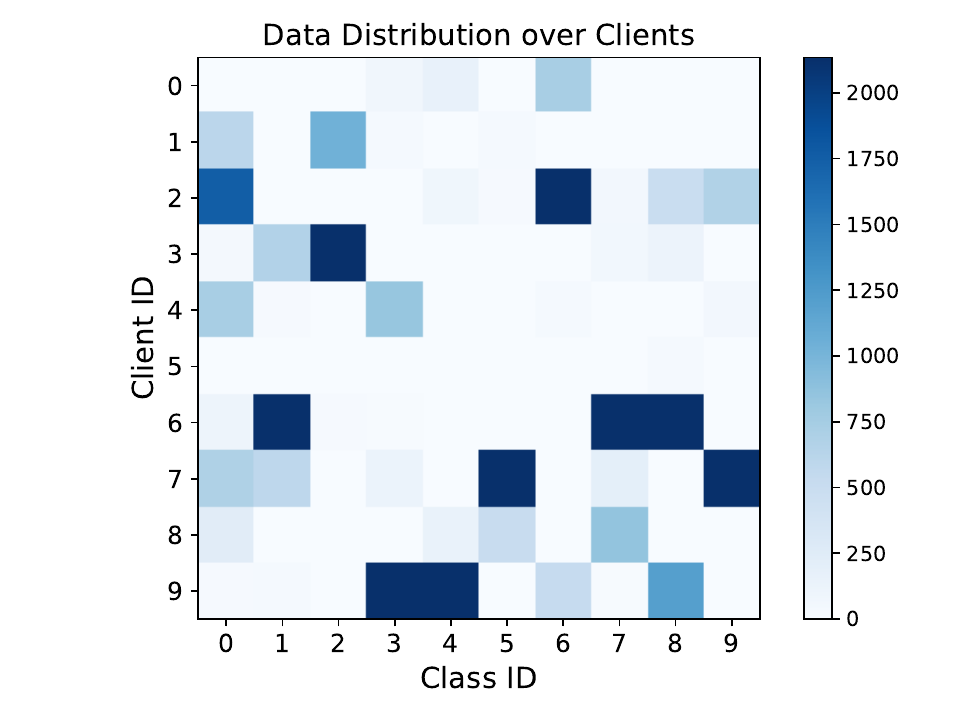}%%%%%%%%%note3
		\end{minipage}%
	}%
        \subfigure[FashionMNIST, $\alpha=0.5$.]{
		\begin{minipage}[t]{0.35\linewidth}
				\includegraphics[width=\linewidth]{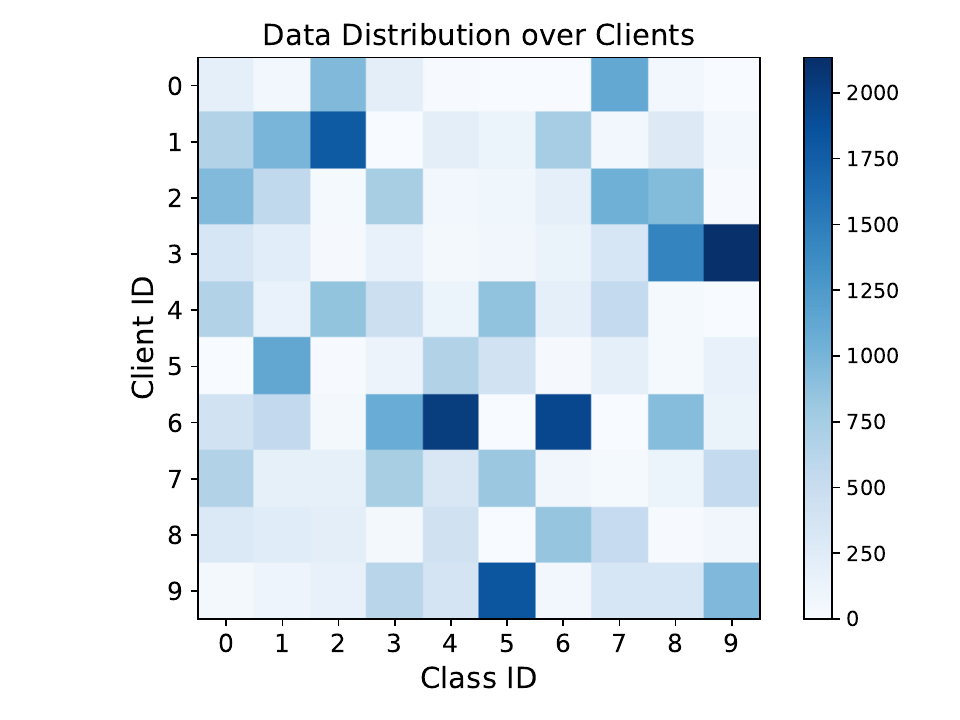}
			% \caption{FedAvg, $\alpha$=0.5.}
		\end{minipage}%
	}%
	\subfigure[FashionMNIST, $\alpha=100$.]{
		\begin{minipage}[t]{0.35\linewidth}
				\includegraphics[width=\linewidth]{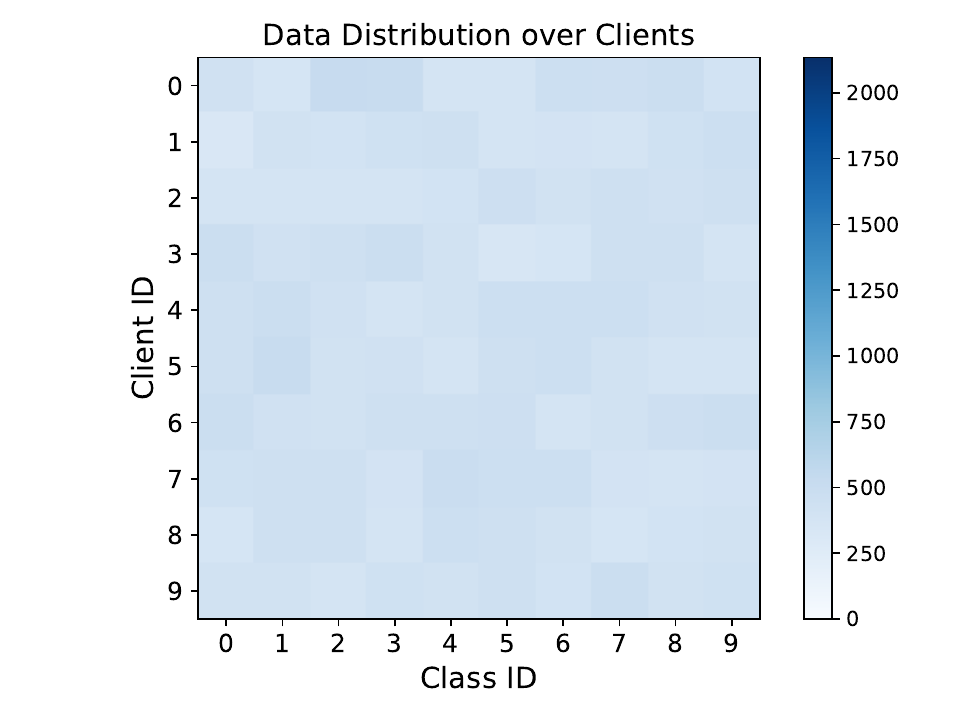}
			% \caption{FedAvg, $\alpha$=0.5.}
		\end{minipage}%
	}%
 
        \subfigure[CIFAR-10, $\alpha=0.1$.]{
        \hspace{-2em}
		\begin{minipage}[t]{0.35\linewidth}
				\includegraphics[width=\linewidth]{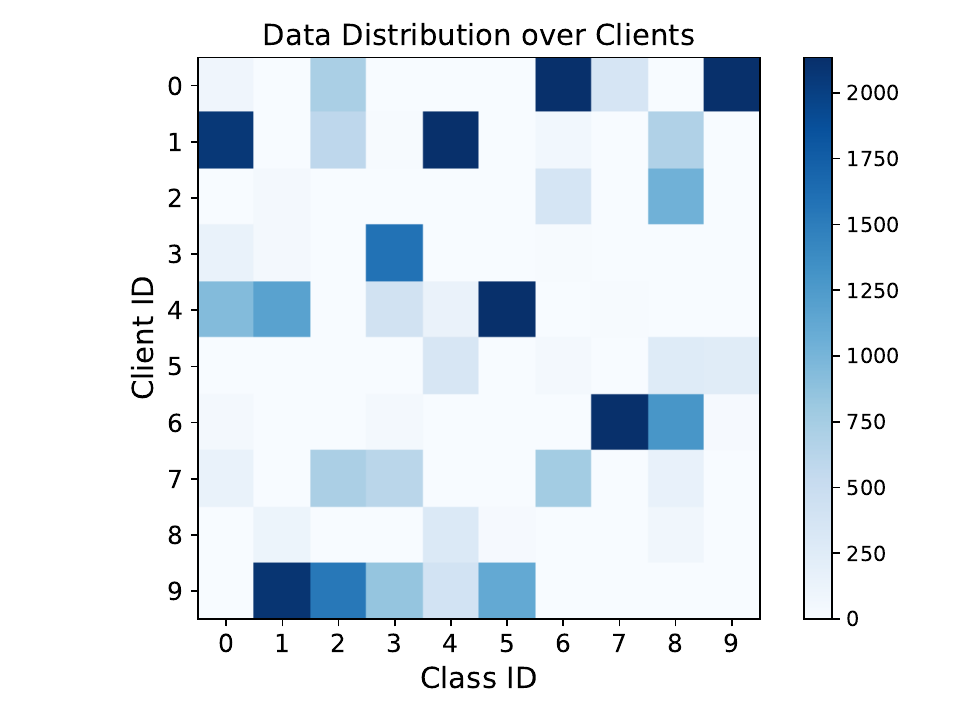}
			% \caption{FedAvg, $\alpha$=0.5.}
		\end{minipage}%
	}%
        \subfigure[CIFAR-10, $\alpha=0.5$.]{
		\begin{minipage}[t]{0.35\linewidth}
				\includegraphics[width=\linewidth]{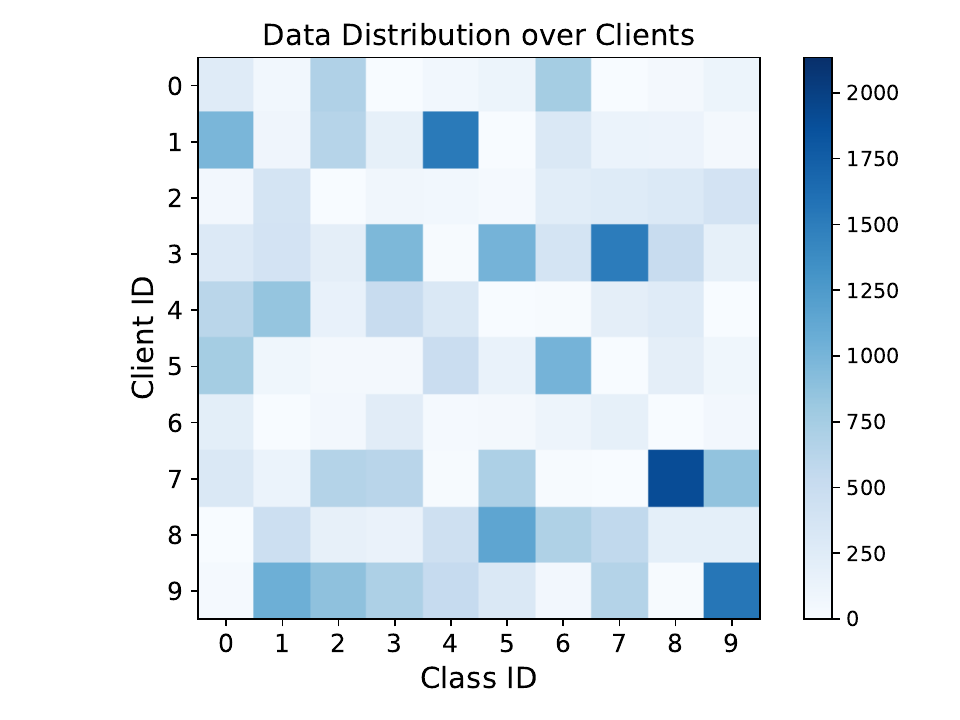}
			% \caption{FedAvg, $\alpha$=0.5.}
		\end{minipage}%
	}%
        \subfigure[CIFAR-10, $\alpha=100$.]{
		\begin{minipage}[t]{0.35\linewidth}
				\includegraphics[width=\linewidth]{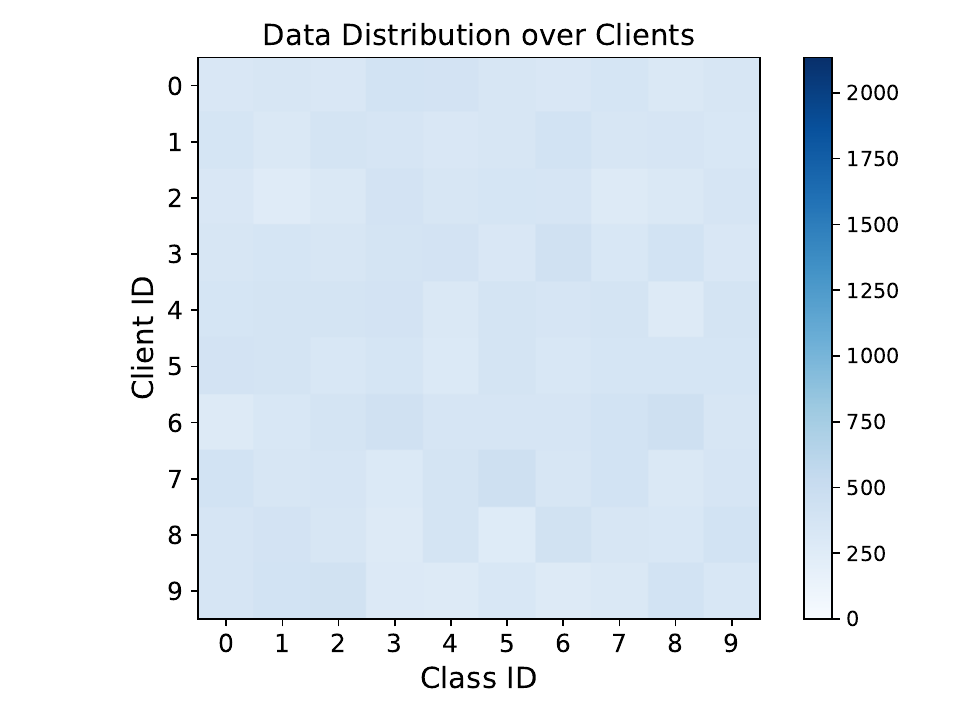}
			% \caption{FedAvg, $\alpha$=0.5.}
		\end{minipage}%
	}%
        \setlength{\abovecaptionskip}{0.cm}
        % \vspace{-0.5cm}
	\caption{Data distribution over categories and clients.}
        % \label{FLpara}
	\centering
 
        \label{datadis}
% \vspace{-0.3cm}
\end{figure*}

\subsection{Hyperparameters}
% \subsubsection{Federated Learning Simulation.} 
If not mentioned otherwise, The number of clients, participation ratio, and local epoch are set to 20, 1, and 1, respectively. We set $\beta=0.1$ for CNN models and $\beta=0.01$ for ResNet models.
We set the initial learning rates as 0.08 and set a decaying LR scheduler in all experiments; that is, in each round, the local learning rate is 0.99*(the learning rate of the last round).
% \subsubsection{Local weight decay.} 
We adopt local weight decay in all experiments. We set the weight decay factor as 5e-4.
% \subsubsection{Optimizer.} 
We use SGD optimizer as the clients’ local optimizer and set momentum as 0.9.

\subsection{Models}
For each dataset, all methods are evaluated with the same model architectures for a fair comparison.
In Table \ref{main_res} and \ref{nlp}, We use a simple CNN for FashionMNIST, ResNet20 \citep{resnet} for CIFAR-10 and CIFAR-100, ResNet18 for Tiny-ImageNet, and TextCNN \citep{TextCNN} for AG News.
% We also report the result on CIFAR-10 and CIFAR-100 with a simple CNN network in Table \ref{cnnback}.

\textbf{SimpleCNN.} The SimpleCNN is a convolution neural network model with ReLU activations. In this paper CNN consists of 3 convolutional layers followed by 2 fully connected layers. The first convolutional layer is of size (3, 32, 3) followed by a max pooling layer of size (2, 2). The second and third convolutional layers are of sizes (32, 64, 3) and (64, 64, 3), respectively. The last two connected layers are of sizes (64*4*4, 64) and (64, num\_classes), respectively.

\textbf{ResNet, WRN, DenseNet and ViT.} We followed the model architectures used in \citep{fedlaw, vit, modelarc}. The numbers of the model names mean the number of layers of the models. Naturally, the larger number indicates a deeper network. For the Wide-ResNet56-4 (WRN56\_4) in Table \ref{backbone}, "4" refers to four times as many filters per layer.

%%%%%%%%%%%%%%%%%%%%%%%%%%%%%%%%%%%%%%%%%%%%%%%%%%%%%%%%%%%%
% \vspace{-3em}
\red{\section{Theory Analysis}\label{theory}}
\newtheorem{theorem}{Theorem}[section]
\newtheorem{proposition}[theorem]{Proposition}
\newtheorem{lemma}[theorem]{Lemma}

\newtheorem{corollary}[theorem]{Corollary}
\newtheorem{definition}[theorem]{Definition}
\newtheorem{assumption}[theorem]{Assumption}
\newtheorem{remark}[theorem]{Remark}

In the following analysis, we omit $t$ to assist clarity. 
In Federated Learning, the global model is trained by aggregating client updates. The generalization gap is defined as:
\[
\mathcal{G} = \left|\mathbb{E}_{x \sim \mathcal{D}}[L(\mathbf{w}; x)] - \frac{1}{K} \sum_{k=1}^K \mathbb{E}_{x \sim \mathcal{D}_k}[L(\mathbf{w}_k; x)]\right|,
\]
where:
\begin{itemize}
    \item $\mathbb{E}_{x \sim \mathcal{D}}[L(\mathbf{w}; x)]$ is the true risk (expected loss over the global distribution),
    \item $\frac{1}{K} \sum_{k=1}^K \mathbb{E}_{x \sim \mathcal{D}_k}[L(\mathbf{w}_k; x)]$ is the empirical risk (average loss over participating clients).
\end{itemize}

\begin{lemma}
Assume that the loss function \( L(w) \) is Lipschitz-smooth and differentiable and its higher-order terms in the Taylor expansion are negligible. Then, the variance of client losses is proportional to the variance of client gradients:
\[
\mathbb{E}[(L(\mathbf{w}_k; \mathcal{D}_k) - L(\mathbf{w}; \mathcal{D}))^2] \propto \mathbb{E}[\|g_k - g_{\text{mean}}\|^2].
\]
\label{lemma}
\end{lemma}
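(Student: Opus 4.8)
The plan is to establish the claimed proportionality through a first-order Taylor expansion of the loss around the global model $\mathbf{w}$, exploiting the Lipschitz-smoothness assumption to control the remainder. First I would write each client's local model as $\mathbf{w}_k = \mathbf{w} - g_k$ (consistent with the relation $\mathbf{w}_k^t = \mathbf{w}_g^t - \mathbf{g}_k^t$ from Step 2 in the Background), and expand $L(\mathbf{w}_k; \mathcal{D}_k)$ about the global point. Since the smoothness assumption lets us drop higher-order terms, the leading deviation is
\begin{equation*}
L(\mathbf{w}_k; \mathcal{D}_k) - L(\mathbf{w}; \mathcal{D}) \approx \langle \nabla L(\mathbf{w}), \mathbf{w}_k - \mathbf{w}\rangle = -\langle \nabla L(\mathbf{w}), g_k\rangle,
\end{equation*}
so that the loss deviation is, to first order, a linear functional of the client gradient $g_k$.

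Next I would take the mean over clients to identify the reference point: averaging the above gives a term governed by $g_{\text{mean}} = \frac{1}{K}\sum_k g_k$, so the centered loss deviation $L(\mathbf{w}_k;\mathcal{D}_k) - \overline{L}$ is approximately $-\langle \nabla L(\mathbf{w}), g_k - g_{\text{mean}}\rangle$. Squaring and taking expectation then yields
\begin{equation*}
\mathbb{E}[(L(\mathbf{w}_k;\mathcal{D}_k) - L(\mathbf{w};\mathcal{D}))^2] \approx \mathbb{E}[\langle \nabla L(\mathbf{w}), g_k - g_{\text{mean}}\rangle^2],
\end{equation*}
and by the Cauchy--Schwarz inequality (or by treating $\nabla L(\mathbf{w})$ as an approximately fixed direction whose squared norm factors out) this is proportional to $\mathbb{E}[\|g_k - g_{\text{mean}}\|^2]$, which is the desired conclusion. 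The constant of proportionality absorbs $\|\nabla L(\mathbf{w})\|^2$ together with any alignment factor between the gradient direction and the client deviations.

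The main obstacle I expect is justifying the last step rigorously: passing from $\mathbb{E}[\langle \nabla L(\mathbf{w}), g_k - g_{\text{mean}}\rangle^2]$ to something genuinely proportional to $\mathbb{E}[\|g_k - g_{\text{mean}}\|^2]$ requires either an isotropy-type assumption on the distribution of the deviations $g_k - g_{\text{mean}}$ (so that the projection onto a fixed direction scales with the full norm) or a reinterpretation of ``$\propto$'' as an upper bound via Cauchy--Schwarz, $\langle \nabla L, g_k - g_{\text{mean}}\rangle^2 \le \|\nabla L\|^2 \|g_k - g_{\text{mean}}\|^2$. I would adopt the latter reading, treating $\|\nabla L(\mathbf{w})\|^2$ as the (slowly varying) proportionality constant, which is the most defensible interpretation given the informal nature of the smoothness-and-negligible-higher-order-terms hypothesis. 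A secondary subtlety is the mismatch between $L(\mathbf{w}_k;\mathcal{D}_k)$ evaluating both a perturbed model \emph{and} a client-specific distribution; I would handle this by assuming the distributional discrepancy contributes only higher-order or commonly-bounded terms, so that the gradient-driven perturbation dominates the variance.
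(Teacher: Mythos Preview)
Your proposal is correct and follows essentially the same route as the paper: a first-order Taylor expansion of $L(\mathbf{w}_k;\mathcal{D}_k)$ around $\mathbf{w}$ to write the loss deviation as $\langle \nabla L(\mathbf{w}), g_k\rangle$ up to sign, then centering by $g_{\text{mean}}$ and squaring to reach $\mathbb{E}[\langle \nabla L(\mathbf{w}), g_k - g_{\text{mean}}\rangle^2]\propto \mathbb{E}[\|g_k-g_{\text{mean}}\|^2]$. If anything, you are more explicit than the paper about the two soft spots (the projection-to-norm step and the $\mathcal{D}_k$ vs.\ $\mathcal{D}$ mismatch), both of which the paper simply absorbs into the informal ``$\propto$'' without further comment.
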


\begin{proof}

We first approximate the client loss \( L(\mathbf{w}_k; \mathcal{D}_k) \) using a first-order Taylor expansion around the global model \( \mathbf{w} \):
   \[
   L(\mathbf{w}_k; \mathcal{D}_k) \approx L(\mathbf{w}; \mathcal{D}) + \nabla_{\mathbf{w}} L(\mathbf{w}; \mathcal{D})^\top (\mathbf{w} - \mathbf{w}_k),
   \]
   where \( \mathbf{w}_k \) is the locally updated model on client \( k \), and \( \mathbf{w} - \mathbf{w}_k \propto g_k \).
We have:
   \[
   (L(\mathbf{w}_k; \mathcal{D}_k) - L(\mathbf{w}; \mathcal{D}))^2 \approx (\nabla_{\mathbf{w}} L(\mathbf{w}; \mathcal{D})^\top (\eta g_k))^2.
   \]

Let us denote \( \delta_k = g_k - g_{\text{mean}} \) and then we have:
   \[
   (L(\mathbf{w}; \mathcal{D}_k) - L(\mathbf{w}; \mathcal{D}))^2 \propto (\nabla_\mathbf{w} L(\mathbf{w}; \mathcal{D})^\top \delta_k)^2.
   \]
   Taking the expectation over all clients, we derive:
   \[
   \mathbb{E}[(L(\mathbf{w}; \mathcal{D}_k) - L(\mathbf{w}; \mathcal{D}))^2] \propto \mathbb{E}[\|g_k - g_{\text{mean}}\|^2].
   \]
\end{proof}

\begin{theorem}
The following upper bound of the generalization gap exists:
\begin{align}
    \mathcal{G} \leq C \cdot \sqrt{\frac{\tau}{K}}.
\end{align}
\end{theorem}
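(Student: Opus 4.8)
The plan is to view $\mathcal{G}$ as the absolute value of an empirical mean over the $K$ clients and then extract the $1/\sqrt{K}$ rate from the variance reduction that averaging provides, finally converting the resulting loss variance into the gradient variance $\tau$ via Lemma~\ref{lemma}. First I would introduce the per-client deviation
\[
d_k := \mathbb{E}_{x\sim\mathcal{D}}[L(\mathbf{w};x)] - \mathbb{E}_{x\sim\mathcal{D}_k}[L(\mathbf{w}_k;x)],
\]
so that directly from the definition $\mathcal{G} = \bigl|\tfrac{1}{K}\sum_{k=1}^{K} d_k\bigr|$. Casting $\mathcal{G}$ in this sample-mean form is precisely what makes the $1/\sqrt{K}$ scaling visible.

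Second, I would treat the $d_k$ as centered and approximately independent across clients, so that the cross terms in the second moment cancel and
\[
\mathbb{E}[\mathcal{G}^2] = \frac{1}{K^2}\sum_{k=1}^K \mathbb{E}[d_k^2] = \frac{1}{K}\cdot \frac{1}{K}\sum_{k=1}^K \mathbb{E}[d_k^2].
\]
The inner average $\frac{1}{K}\sum_k \mathbb{E}[d_k^2]$ is exactly the variance of the client losses appearing on the left-hand side of Lemma~\ref{lemma}. Taking square roots via the power-mean inequality then gives a bound of the form $\mathcal{G} \le \sqrt{\bar v / K}$ in expectation (equivalently, as a high-probability bound via concentration), where $\bar v$ denotes this average loss variance.

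Third, I would invoke Lemma~\ref{lemma} to replace $\bar v$ by the gradient variance: since $\mathbb{E}[(L(\mathbf{w}_k;\mathcal{D}_k)-L(\mathbf{w};\mathcal{D}))^2] \propto \mathbb{E}[\|g_k - g_{\text{mean}}\|^2]$, identifying the averaged squared deviation with $\tau$ up to a constant yields $\bar v \le C'\tau$, where $C'$ absorbs the Lipschitz-smoothness constant and the proportionality factor of the lemma. Substituting gives $\mathcal{G} \le C\sqrt{\tau/K}$ with $C=\sqrt{C'}$, which is the claim.

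The main obstacle is justifying the variance-reduction step rigorously: the $d_k$ all reference the same aggregated global model $\mathbf{w}$ and are therefore correlated, so the clean cancellation of cross terms is not automatic. A fully rigorous argument would either impose an explicit independence assumption on client data sampling or replace the naive second-moment computation with a bounded-difference concentration inequality (e.g.\ McDiarmid), which still delivers the $1/\sqrt{K}$ rate but at the cost of an additive confidence term. A secondary technical point is reconciling the definition $\tau = \frac{1}{K}\sum_k \|g_k - g_{\text{mean}}\|$, a mean of norms, with the mean of squared norms $\frac{1}{K}\sum_k \|g_k - g_{\text{mean}}\|^2$ produced by Lemma~\ref{lemma}; under a uniform bound on the deviations these agree up to a constant that can be folded into $C$.
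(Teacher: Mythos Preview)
Your proposal is correct at the same level of rigor as the paper and follows the same overall skeleton: express $\mathcal{G}$ as an average of per-client loss deviations, extract a $1/\sqrt{K}$ rate from that average, and then invoke Lemma~\ref{lemma} to trade the client-loss variance $\sigma^2$ for the gradient variance $\tau$. The only genuine difference is the tool you use in the middle step. The paper appeals to Bernstein's inequality to get a tail bound on $\mathcal{G}$ and then integrates the tail to obtain $\mathbb{E}[\mathcal{G}]\le\sqrt{2\sigma^2/K}$, whereas you go the more elementary route of computing $\mathbb{E}[\mathcal{G}^2]$ directly under a centered/independent assumption and applying the power-mean inequality. Your route is shorter and avoids the somewhat heavy machinery of Bernstein (which the paper uses only for its variance term anyway), while the paper's route has the side benefit of also yielding a high-probability statement before passing to expectation. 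You even flag concentration inequalities as an alternative justification, which is exactly what the paper does. Your closing caveats about the dependence of the $d_k$ on the common $\mathbf{w}$ and about reconciling $\tau$ as a mean of norms versus a mean of squared norms are apt; the paper glosses over both points with the informal identification $\sigma^2\approx\tau$, so your discussion is, if anything, more careful than the original.
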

\begin{proof}
Using Bernstein's inequality, the probability of $\mathcal{G}$ is bounded as:
\[
\Pr\left(\mathcal{G} \geq \epsilon\right) \leq 2 \exp\left(-\frac{K \epsilon^2}{2(\sigma^2 + M\epsilon/3)}\right),
\]
where $\sigma^2 = \frac{1}{K} \sum_{k=1}^K \sigma_k^2$ is the variance of client losses and $M$ is the maximum deviation of individual client losses.

The expected generalization gap is bounded by integrating over $\epsilon$:
\[
\mathbb{E}[\mathcal{G}] = \int_0^\infty \Pr(|\hat{R}(\mathbf{w}) - R(\mathbf{w})| \geq \epsilon) \, d\epsilon,
\]
which can be further written as:
\[
\mathbb{E}[\mathcal{G}] \leq \sqrt{\frac{2 \sigma^2}{K}}.
\]

Given Lemma~\ref{lemma} and $\tau = \frac{1}{K} \sum_{k=1}^K \|g_k - g_{\text{mean}}\|$, the variance of client losses can be approximated by the variance of client gradients, i.e., $\sigma^2 \approx \tau$ Therefore, we have:
\[
\mathbb{E}[\mathcal{G}] \leq \sqrt{\frac{2 \tau}{K}}.
\]
\end{proof}

\textbf{Discussion:}
The above variance-based generalization bound provides a theoretical foundation for managing client heterogeneity in FL. It suggests that when $\tau$ is large, meaning that the client gradient variance is high, which also indicates that there is a heterogeneity between the clients. In this case, the generalization gap is increased as well. According to Eq. (7) in our main paper, higher $\tau$ leads to lower $\gamma$, which increases the strength of the regularization (Eq. (3) in our main paper). To summarize, our adaptive approach dynamically adjusts the strength of the regularization according to the current heterogeneity between the clients during training, which reduces the generalization gap in FL models.

\end{document}